\newtheorem{proposition}{Proposition}
\newtheorem*{proposition*}{Proposition}
\newtheorem{theorem}{Theorem}
\newtheorem{fact}{Fact}
\newcommand{\PP}{\mathbb{P}}
\newcommand{\EE}{\mathbb{E}}
\DeclareMathOperator{\tr}{tr}
\DeclareMathOperator{\indic}{\mathbbm{1}}
\newcommand\rev[1]{{\color{black} #1}}
\DeclareRobustCommand*\cal{\@fontswitch\relax\mathcal}
\def\BibTeX{{\rm B\kern-.05em{\sc i\kern-.025em b}\kern-.08emT\kern-.1667em\lower.7ex\hbox{E}\kern-.125emX}}
\begin{document}
\title{Computationally Efficient Estimation of the Spectral Gap of a Markov Chain
\author{Richard Combes ($\dag$) and Mikael Touati ($\times$)}
\thanks{$\dag$: Centrale-Supelec, Gif-sur-Yvette, France}
\thanks{$\times$: Orange Labs, Paris, France}} 
\renewcommand\footnotemark{}
\renewcommand\footnoterule{}
\maketitle
\begin{abstract}
	We consider the problem of estimating from sample paths the absolute spectral gap $1 - \lambda_\star$ of a reversible, irreducible and aperiodic Markov chain $(X_t)_{t \in \mathbb{N}}$ over a finite state space $\Omega$. We propose the ${\tt UCPI}$ (Upper Confidence Power Iteration) algorithm for this problem, a low-complexity algorithm which estimates the spectral gap in time ${\cal O}(n)$ and memory space ${\cal O}((\ln n)^2)$ given $n$ samples. This is in stark contrast with most known methods which require at least memory space ${\cal O}(|\Omega|)$, so that they cannot be applied to large state spaces. \rev{We also analyze how $n$ should scale to reach a target estimation error.} Furthermore, ${\tt UCPI}$ is amenable to parallel implementation.
\end{abstract}
\section{Introduction and related work}\label{sec:introduction}

\subsection{Introduction and contribution}
We consider the problem of estimating from sample paths the absolute spectral gap $1 - \lambda_\star$ of a reversible, irreducible and aperiodic Markov chain $(X_t)_{t \in \mathbb{N}}$ over a finite state $\Omega$. The absolute spectral gap is $1 - \lambda_\star$ where $\lambda_\star$ is the second largest (in absolute value) eigenvalue of $P$ , the transition matrix of the chain. We are interested in the case where $P$ is not known explicitly to the experimenter. Rather, she may draw sample paths from the Markov chain, and solely use those sample paths in order to estimate $\lambda_\star$. In other words we are interested in simulation-based (or black-box) estimation. Furthermore, we mainly concern ourselves with the case where the state space $\Omega$ is large since this is often the case in practical applications. In fact, when $\Omega$ is large (even if $P$ is known) one may have to resort to a simulation-based method since storing $P$ and manipulating it requires at least ${\cal O}(|\Omega|)$ memory space if $P$ is sparse and ${\cal O}(|\Omega|^2)$ in the worse case. Therefore, we focus on simulation-based, computationally efficient estimation of the absolute spectral gap $1 - \lambda_\star$ of Markov chains over large state spaces.

\subsubsection{Our contribution} We make the following contributions (a) We propose ${\tt UCPI}$ (Upper Confidence Power Iteration), a computationally efficient algorithm to estimate the absolute spectral gap in time ${\cal O}(n)$ and memory space ${\cal O}((\ln n)^2)$ given $n$ samples. \rev{We analyze how $n$ should scale to reach a target estimation error in section~\ref{subsec:complexity}.} (b) We prove that ${\tt UCPI}$ is consistent and analyze its convergence rate as a function of the number of samples. (c) We show how ${\tt UCPI}$ is applicable to a broad set of assumptions e.g. the case where a single sample path is available, the case where one can simulate transitions of the chain etc.

\subsection{Markov chains over large state spaces}  Markov chains are typically encountered in the following setting. Consider $\pi$ the stationary distribution of $P$ and  $f:\Omega \to [0,1]$ some function. We would like to compute the quantity: 
$$
Z = \sum_{x \in \Omega} f(x) \pi(x)  = \EE_{X \sim\pi} f(X).
$$
If $\Omega$ is large, even if $\pi$ were known, summing over all elements of $x \in \Omega$ might not be feasible and one may instead use a simulation method by drawing sample paths of $(X_t)_{t}$ starting from some arbitrary initial distribution, since:
$$
	\EE( f(X_k) ) \underset{k \to \infty}{\to} Z. 
$$
So we may compute $Z$ by drawing many independent sample paths of length $k$ with $k$ large enough. However, to select the sample path length $k$ properly, one needs information about the mixing properties (i.e. the convergence speed in the above expression) of $(X_t)_{t \in \mathbb{N}}$. Very often, in practice, the sample path length is chosen by hand using some heuristic rule. Therefore estimating the spectral gap and other measures of the mixing rate of $(X_t)_{t \in \mathbb{N}}$ has great practical importance since it enables to select the sample path length automatically using a principled approach. Some examples of this setting are (a) Markov Chain Monte Carlo (MCMC), to draw from a stationary distribution $\pi$ known up to normalization \cite{AndrieuMCMC} (b) in statistical physics \cite{MezardMontanari} to compute the partition function of spin models (c) in Bayesian statistics, to compute the posterior distribution, which is usually intractable and (d) to compute the stationary performance of systems described by complex queuing models. Surprisingly, in some problems such as computing the volume of a high dimensional convex body, MCMC is the only known method which is provably efficient \cite{LovaszMCMC}.

\subsection{Mixing properties of Markov chains}

 We give a quick overview of several measures for quantifying mixing of Markov Chains (refer to \cite{Lev2009} for a complete survey). The mixing time $t_{m}$ is the smallest $t$ such that:
$$
	{1 \over 2} \max_{\mu} \sum_{x \in \Omega} \Big|\PP(X_t = x | X_0 \sim \mu) - \pi(x)\Big| \le {1 \over 4}
$$
where the maximum is taken over all the possible initial distributions $\mu$. Another measure is the relaxation time $t_{r} = {1 \over 1 - \lambda_\star}$ which verifies for reversible, irreducible and aperiodic chains:
$$
	(t_{r} - 1) \ln 2 \le t_{m} \le t_{r} \ln {4 \over \min_{x \in \Omega} \pi(x)}.
$$
Other measures include the conductance which is linked to the spectral gap through Cheeger's inequality. Some of these measures are hard to compute, for instance computing conductance is in general NP-hard \cite{Lov1996}. We focus on estimating $\lambda_\star$, which enables to estimate $t_{r}$.

In fact, for any initial distribution $X_0 \sim \mu$ and any function $f:\Omega \to [0,1]$, the bias of the MCMC method can be controlled with the following inequality (see \cite{Lev2009}):
$$
	\Big|Z - \EE( f(X_k) ) \Big| \le 2^{ -\lfloor {k \over t_{m}} \rfloor} \;,\; \forall k
$$
If the experimenter knows $t_m$ or $t_r$, she may select the sample path length $k$ to ensure that the bias is smaller than any specified target error. Since the bias \emph{cannot} be directly estimated, being able to access $t_r$ or $t_m$ is the only way one can choose the sample path length $k$ in a principled manner, and ensure that the result of MCMC is correct up to some given accuracy. It is noted that while knowing an \emph{upper} bound of $t_m$ or $t_r$ is very interesting in practice (it enables to ensure correctness of the MCMC method), knowing a lower bound or a point estimate of $t_m$ or $t_r$ seems much less interesting, since it does not enable to guarantee anything about the output of the MCMC method.

\subsection{Objectives}

	\subsubsection{Computational complexity.} Since $\lambda_\star$ is a function of the transition matrix $P$, the most natural approach to estimating $\lambda_\star$ would be to first estimate $P$ by:
$$
{\hat P}(x,y) = { \sum_{t=1}^n \indic\{ (X_t,X_{t+1}) = (x,y)\} \over \sum_{t=1}^n \indic\{ X_t = x \}}
$$
the empirical conditional distribution of $X_{t+1}$ knowing $X_t$, and then compute the eigenvalues of $\hat P$. However this approach might not be feasible for large state spaces, as simply storing $\hat P$ requires ${\cal O}(|\Omega|^2)$ space and computing its eigenvalues requires time ${\cal O}(|\Omega|^3)$.  \rev{We are interested in methods that are computationally efficient (requiring less than ${\cal O}(|\Omega|)$ memory space) and can be used on large state spaces}. \rev{Also, the approach described above makes it difficult to derive tight confidence intervals on $\lambda_\star$ - as the variance of the estimate is rather involved. Thus obtaining estimates whose variations are easier to control as done in the present work is highly desirable}.

	\subsubsection{Upper confidence bounds} The main goal of estimating $\lambda_\star$ is to estimate $t_{r}$, which enables to select the sample path length in MCMC methods. As explained above, point estimates and lower confidence bounds are of little interest, and we are mostly interested in \emph{upper} confidence bounds for $t_r$, and this requires \emph{upper} confidence bounds on $\lambda_\star$. Namely if one has an estimate ${\hat \ell_\star}$ such that ${\hat \ell_\star} \ge \lambda_\star$ with high probability, then ${1 \over 1 - {\hat \ell_\star}} \ge t_r$ with high probability as well and ${\hat \ell_\star}$ is sufficient to control the error of MCMC.
	
	\subsubsection{Black box methods} In many problems of interest, one may not access elements of the transition probability matrix directly, and one may only simulate transitions of the Markov chain, so that one can only estimate $\lambda_\star$ by observing sample paths of the said Markov chain. This setting is interesting even in the case where $P$ is known but too large to be stored in memory, so that we may use simulation methods as a way to get around the size of $P$. 

	\subsubsection{Goal} Our goal is therefore to find upper confidence bounds of $\lambda_\star$ in a computationally efficient manner, when one can only simulate the Markov chain at hand.

\subsection{Related work}

 \subsubsection{Non black-box methods} If $P$ is known, computing the spectral gap can be done using several methods. A naive approach is to compute the spectrum of $P$ which requires ${\cal O}(|\Omega|^3)$ time and ${\cal O}(|\Omega|^2)$ space in the worse case. To compute the largest eigenvalue, a more efficient method is Power Iteration which requires  ${\cal O}(|\Omega|^2)$ time and space in general, and ${\cal O}(|\Omega|)$ time and space if $P$ is sparse \cite{Qua2007}. Applying Power Iteration several times enables to compute eigenvalues in deceasing order of magnitude. However, if $\Omega$ is very large these methods are not usable and of course, they do not apply to the black box setting.

\subsubsection{Black-box methods} A contribution closely related to ours is \cite{Hsu2015}, where the authors provide both estimates and confidence intervals for mixing times in the black-box setting where only one sample path is available. From this sample path, they construct the empirical transition matrix ${\hat P}$ and use it as a plugin estimator for $P$, and they prove that this is statistically efficient. Their method requires time ${\cal O}(n+|\Omega|^3)$ and space ${\cal O}(|\Omega|^2)$, and is hence mainly adapted to state spaces of moderate size. Our contribution is therefore complementary to \cite{Hsu2015}, which is  mainly concerned with statistical efficiency for moderate $\Omega$ while we are interested computationally efficient approaches for large $\Omega$.

\subsubsection{Point estimates} A recent contribution \cite{Qin2017} does provide point estimates for $\lambda_{\star}$ for Markov chains on general, infinite state spaces, however no confidence intervals are provided, so that one cannot control the error of MCMC methods using the proposed approaches. 

\subsubsection{Perfect sampling} Finally, there exists perfect sampling methods which allow to sample from the stationary distribution by considering random sampling times such as \cite{ProppWilson}. This removes the need to study the mixing of $P$. However these methods cannot be applied to large state spaces as they imply simulating ${\cal O}(|\Omega|)$ Markov chains in parallel, unless the trajectories of the Markov chain have some special monotonicity properties. \rev{Therefore the memory required for perfect sampling is at least ${\cal O}(|\Omega|)$, while we focus on more memory-efficient approaches.}

\rev{
\subsubsection{Spectral analysis from noisy observations}  There exists a growing litterature on how to estimate eigenvalues of a matrix $P$, which may or may not be a transition matrix, when one may only sample entries of $P$ \cite{Han2017,Kong2017,Khetan2017}. While these methods are more general than ours and could be applied to our setting, we leverage the fact that for Markov chains one may sample from $P^k$ by sampling $k \ge 1$ successive transitions.}

\rev{
\subsubsection{Computational complexity} In summary all techniques from the literature require at the very least time and space ${\cal O}(|\Omega|)$. We focus on  more efficient approaches in terms of time and memory, in particular we propose methods that require strictly less memory space than ${\cal O}(|\Omega|)$.}

The remainder of the paper is organized as follows. In section \ref{sec:model} we state our model and in section~\ref{sec:algorithm} we state the ${\tt UCPI}$ algorithm and provide its computational complexity. In section \ref{sec:genericAnalysis} we analyze the estimation error of ${\tt UCPI}$ as a function of the number of samples, and explain how to set its input parameters. In section \ref{sec:extensions}, we generalize our results to a lighter set of assumptions. In section \ref{sec:numerical_experiments}, we perform numerical experiments and section~\ref{sec:conclusion} concludes the paper.
\section{Model}\label{sec:model}
	\subsection{Definitions} 
	We consider a time-homogeneous Markov chain $(X_t)_{t \in \mathbb{N}}$ on a finite state space $\Omega$ with transition matrix $P$. We write 
$$
P(x,y)=\PP(X_{t+1} = y | X_t = x) \,,\, x,y \in \Omega,
$$
 the transition probability from state $x$ to state $y$. We assume that $(X_t)_{t \in \mathbb{N}}$ is irreducible, aperiodic (thus ergodic with unique stationary distribution $\pi$), reversible and lazy i.e. 
 $$
 P(x,x) \ge {1 \over 2} \,,\,x \in \Omega.
 $$
  Since $P$ is both reversible and lazy, its eigenvalues are real and positive.   We denote by $(\lambda_1,\dots,\lambda_{|\Omega|})$ the eigenvalues of $P$, which we assume sorted so that:
$$
1 = \lambda_{1} > \lambda_{2} \geq \lambda_3 \ge \lambda_{|\Omega|} \ge 0,
$$
where $\lambda_1 = 1$ and $1 > \lambda_2$ since the chain is irreducible and aperiodic. The spectral gap of $P$ is $1-\lambda_2$. Throughout the paper, we have $$
\lambda_2 = \lambda_\star := \max \left\{ |\lambda|: \lambda \text{ eigenvalue of } P , \lambda\neq 1 \right\}.
$$

Even though the laziness assumption might seem restrictive at first sight,  we show that our results can easily be extended to all reversible chains (Section~\ref{sec:extensions}). 

\subsection{Computational models} 

In order to estimate $\lambda_\star$, we do not assume that the transition matrix $P$ is known explicitly, but we simply assume that one is able to draw sample paths $(X_t)_{t \in \mathbb{N}}$ in order to perform this estimation. Namely we are interested in simulation-based (or black-box) estimation of $\lambda_\star$. We will look at two possible computational models.

	\subsubsection{The Random Transition Function (RTF) model} Here one has access to two subroutines ${\tt NextState}$ and ${\tt Uniform}$, whose output is random. The ${\tt NextState}$ sub-routine takes one input argument, and, for any $x$, ${\tt NextState}(x)$ outputs a random variable whose distribution is that of $X_{t+1}$ knowing that $X_{t} = x$, namely:
	$$
	\PP({\tt NextState}(x) = y) = P(x,y)  \,,\, x,y \in \Omega.
	$$
	 In other words, ${\tt NextState}$ simulates a single transition of the Markov chain $(X_t)_{t \in \mathbb{N}}$ and one can simulate a sample path using repeated calls to ${\tt NextState}$. The ${\tt Uniform}$ subroutine takes no input argument, and its output is a random variable uniformly distributed on $\Omega$ i.e 
	 $$
	 		\PP({\tt Uniform}() = x) = {1 \over |\Omega|}  \,,\, x \in \Omega.
	 $$
	 It is noted that, in general, when $\Omega$ is a set with large size, drawing a sample from the uniform distribution may not be straightforward (this is more or less equivalent to counting the number of elements in $\Omega$). 
	 In section~\ref{sec:extensions}, we address the case where using the uniform distribution may not be feasible and a general distribution $\mu$ on $\Omega$ is used. We assume that a call to subroutine ${\tt NextState}$ (or ${\tt Uniform}$) requires ${\cal O}(1)$ time to complete. We assume that $n$ samples are available to estimate $\lambda_\star$, so that one is allowed (at most) $n$ calls to ${\tt NextState}$ before outputting an estimate.
	
	\subsubsection{The Unique Sample Path (USP) model} Here one is simply provided with a sample path $(X_t)_{t=0,\dots,n}$ of length $n$ in order to estimate $\lambda_\star$. One is not allowed to choose the starting state $X_0$. This is the model considered in~\cite{Hsu2015}. Furthermore, since a sample path of length $n$ can be generated by $n$ calls to ${\tt NextState}$, estimation in the USP model is more arduous than in the RTF model. While we mainly focus on the RTF model, we show in section~\ref{sec:extensions} how the USP model can be reduced to RTF, so that the same estimation techniques may be used. Furthermore, this reduction is computationally efficient (in time ${\cal O}(n)$), so that the computational complexity is the same in both models. 

\subsection{Models in practice} 
The RTF model applies whenever one has access to a black-box that is able to simulate one-step transitions from a Markov chain, but one cannot determine $P$ a priori (for instance because the underlying processes driving the transitions of $(X_t)_t$ are too complex). A typical example of this situation is Reinforcement Learning (RL): here $(X_t)_{t \in \mathbb{N}}$ describes the sequence of rewards obtained by applying some fixed policy, one estimates the average reward by averaging $(X_t)_{t \in \mathbb{N}}$ for ''long enough''. 

	The RTF model also applies for MCMC methods, where $P$ is known (it is in fact chosen by the experimenter), but the state space $\Omega$ is so large that one cannot store $P$ in memory, and one may only draw sample paths by simulating successive transitions. Since MCMC methods are usually designed to compute a summation over the elements of $\Omega$, they only make sense when one is not able to use ${\cal O}(|\Omega|)$ memory space, otherwise one could simply enumerate the elements of $\Omega$. More generally, many applications consider Markov chains over very large state spaces, where $P$ is known but storing it in memory is not feasible. For instance, for spin systems used in statistical physics (e.g Ising and Potts models), there are at least $|\Omega| \ge 2^{s}$ states where $s$ is the number of particles. In other words, in these situations one chooses to use a simulation-based estimate simply for computational reasons. The USP model is more relevant when $(X_t)_{t}$ represents historical data of some process that the experimenter cannot modify at will (e.g. climate data, financial data etc.). 
\section{The UCPI Algorithm}\label{sec:algorithm}
In this section we state ${\tt UCPI}$, a computationally efficient algorithm for estimating the spectral gap of a Markov Chain. Throughout the next sections we consider the RTF model. The extension of ${\tt UCPI}$ to the USP model is addressed in section~\ref{sec:extensions}.

\subsection{Retrieving the spectral gap from traces} Before describing ${\tt UCPI}$ in details, we highlight its rationale, which consists in estimating $\lambda_2$ based on the trace of the powers of $P$. This idea is in fact the backbone of Power Iteration. Recall that the eigenvalues of $P$ are $(\lambda_1,\dots,\lambda_{|\Omega|})$, so the eigenvalues of $P^k$ are $(\lambda_1^k,\dots,\lambda_{|\Omega|}^k)$. Since the trace of a matrix is both the sum of its diagonal elements and of its eigenvalues: 
$$
	 \sum_{x \in \Omega} P^k(x,x) =  \tr(P^k) = \sum_{i=1}^{|\Omega|} \lambda_i^{k}.  
$$
Since $\lambda_1 = \lambda_1^k = 1$, we have:
$$
	 \Big[\sum_{x \in \Omega} P^k(x,x) - 1\Big]^{1 \over k} = \lambda_2\Big[\sum_{i=2}^{|\Omega|} \Big({\lambda_i \over \lambda_2}\Big)^k \Big]^{1 \over k}. 
$$
Since $\lambda_2 \ge \lambda_i \ge 0$ for $i=2,\dots,|\Omega|$, we get:
$$
	1 \le \Big[\sum_{i=2}^{|\Omega|} \Big({\lambda_i \over \lambda_2}\Big)^k \Big]^{1 \over k} \le |\Omega|^{1 \over k} \underset{k \to \infty}{\to} 1,
$$
and letting $k \to \infty$ above yields:
\begin{equation}\label{eq:approximation_convergence}
	\lim_{k \to \infty} \Big[\sum_{x \in \Omega} P^k(x,x) - 1\Big]^{1 \over k} = \lambda_2.
\end{equation}

\subsection{Trace estimates} The relationship above shows that one can estimate $\lambda_2$ arbitrary well by estimating the trace of $P^k$ for $k$ arbitrarly large. The cornerstone of ${\tt UCPI}$ is that the trace of $P^k$ can be easily estimated from sample paths. Indeed, it suffices to estimate the probability that the chain $(X_t)_t$ returns to its starting state after $k$ steps, providing that its starting state is chosen uniformly at random. Consider $U$ uniformly distributed on $\Omega$, then:
\begin{align}\label{eq:trace_proba_equivalence}
	\PP(X_k = X_0 | X_0 = U) 
		= \sum_{x \in \Omega} \PP(X_k = x | X_0 = x) \PP(U = x) =  {1 \over |\Omega|} \sum_{x \in \Omega} P^k(x,x).
\end{align}
	The rationale of the ${\tt UCPI}$ algorithm can be summarized as proposition~\ref{prop:power_iteration}. This algorithm may be seen as a low-complexity, sampling-based version of Power Iteration.
\begin{proposition}\label{prop:power_iteration}
	If $U$ is uniformly distributed on $\Omega$, then:
	$$
		\lambda_2 = \lim_{k \to \infty} \Big[|\Omega| \PP(X_k = X_0 | X_0 = U) - 1\Big]^{1 \over k}.
	$$
\end{proposition}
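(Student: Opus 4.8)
The plan is simply to splice together the two identities already established in the text just above the statement, after briefly checking the two facts they rest on. First I would record the probabilistic identity: conditioning on the value of $U$, using that $U$ is uniform on $\Omega$ and independent of the transitions of the chain, and applying Chapman--Kolmogorov in the form $\PP(X_k = x \mid X_0 = x) = P^k(x,x)$, one gets
$$
|\Omega|\, \PP(X_k = X_0 \mid X_0 = U) = \sum_{x \in \Omega} P^k(x,x) = \tr(P^k),
$$
which is exactly \eqref{eq:trace_proba_equivalence}. Hence it is enough to prove $\big[\tr(P^k) - 1\big]^{1/k} \to \lambda_2$, i.e. \eqref{eq:approximation_convergence}, and then chain the two limits.

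For the trace side I would invoke reversibility: $P$ is self-adjoint for the inner product weighted by $\pi$, hence diagonalizable with real eigenvalues $\lambda_1,\dots,\lambda_{|\Omega|}$, and $P^k$ has eigenvalues $\lambda_1^k,\dots,\lambda_{|\Omega|}^k$. Since the trace equals the sum of the eigenvalues, $\tr(P^k) = \sum_{i=1}^{|\Omega|} \lambda_i^k$, and laziness guarantees $\lambda_i \ge 0$ for all $i$, so every term is nonnegative and the quantity under the $k$-th root is well defined. Using $\lambda_1 = 1$, this becomes $\tr(P^k) - 1 = \sum_{i=2}^{|\Omega|} \lambda_i^k$.

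Then I would finish by a squeeze. Assuming first $\lambda_2 > 0$, factoring out $\lambda_2^k$ gives $\tr(P^k)-1 = \lambda_2^k \sum_{i=2}^{|\Omega|} (\lambda_i/\lambda_2)^k$, and since $0 \le \lambda_i/\lambda_2 \le 1$ the sum lies in the interval $[1,|\Omega|]$. Taking $k$-th roots yields $\lambda_2 \le \big[\tr(P^k)-1\big]^{1/k} \le \lambda_2\, |\Omega|^{1/k}$, and since $|\Omega|^{1/k} \to 1$ as $k \to \infty$ the limit equals $\lambda_2$. In the degenerate case $\lambda_2 = 0$ one has $\lambda_i = 0$ for all $i \ge 2$, so $\tr(P^k) - 1 = 0$ and both sides of the asserted identity are $0$. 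Combining with the identity of the first paragraph proves the proposition.

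As for the main obstacle: there is none of real depth here — the whole statement collapses to the spectral decomposition of $P$ plus an elementary sandwich. The only points that need genuine care are (i) invoking reversibility and laziness at the right moment, so that the spectrum is real and nonnegative and the $k$-th roots and the squeeze are both legitimate, and (ii) isolating the degenerate case $\lambda_2 = 0$ before dividing by $\lambda_2^k$.
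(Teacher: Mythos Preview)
Your argument is correct and follows exactly the approach of the paper: combine the probabilistic identity \eqref{eq:trace_proba_equivalence} with the trace--spectral limit \eqref{eq:approximation_convergence} via the same squeeze $\lambda_2 \le [\tr(P^k)-1]^{1/k} \le \lambda_2\,|\Omega|^{1/k}$. If anything you are slightly more careful than the paper, since you explicitly justify the nonnegativity of the $\lambda_i$ before taking $k$-th roots and you treat the degenerate case $\lambda_2 = 0$ separately.
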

\begin{proof}
	From equation~\eqref{eq:trace_proba_equivalence}, we have $\sum_{x \in \Omega} P^k(x,x) = |\Omega|\PP(X_k = X_0 | X_0 = U)$. Replacing the sum in \ref{eq:approximation_convergence} gives the result.\end{proof} We use the following notation:
$$
	m_k = \PP(X_k = X_0 | X_0 = U) = {1 \over |\Omega|} \sum_{i=1}^{|\Omega|} \lambda_i^k.
$$

\subsection{Bias-variance trade-off}

Given a budget of $n$ samples, in order to estimate $\lambda_2$ from proposition~\ref{prop:power_iteration}, one must estimate $m_k$ for $k$ large. To do so, we draw $I = \lfloor {n \over k} \rfloor$ independent sample paths of length $k$ denoted by $$(X_t^1)_{t=0,\dots,k},\dots,(X_t^I)_{t=0,\dots,k},$$  where each sample path starts in a state chosen uniformly at random. An unbiased estimate for $m_k$ is:
$$
	{\hat m}_k = {1 \over I} \sum_{j=1}^I \indic\{ X_k^j = X_0^j \}.
$$
and we estimate $\lambda_2$ with the plug-in estimator:
$$
	{\hat q}_k = \Big[|\Omega| {\hat m}_k - 1\Big]^{1 \over k}.
$$
It remains to choose the value of $k$ to minimize the estimation error. At first sight, choosing $k$ as large as possible seems to be optimal since $[|\Omega| m_k - 1]^{1 \over k}$ tends to $\lambda_2$. However the variance of the resulting estimate grows rapidly with $k$. 

\rev{Let us evaluate the asymptotic distribution of the resulting estimate. As shown in propostion~\ref{prop:deltamethod} (see Appendix), for any fixed $k$ we have:
	$$
		\sqrt{I}\Big( {\hat q}_k -  \Big[ |\Omega| m_k - 1 \Big]^{1 \over k}\Big) \underset{I \to \infty}{\overset{(d)}{\to}} {\cal N}\Big(0, {m_k(1-m_k) |\Omega|^2 \over k^2} \Big[|\Omega| m_k - 1\Big]^{2(1-k) \over k}\Big)
	$$
  Now in general, $$m_k \underset{k \to \infty}{\sim} {1 + \lambda_2^k \over |\Omega|},$$ so for large $k$ the asymptotic variance of the estimate ${\hat q}_k$ becomes:
$$
	{m_k(1-m_k) |\Omega|^2 \over I k^2} \Big[|\Omega| m_k - 1\Big]^{2(1-k) \over k} \underset{k \to \infty}{\sim}  { |\Omega| - 1 \over k^2 I \lambda_2^{2(k-1)}}.
$$
}
The variance grows exponentially with $k$ so that $k$ must be chosen carefully to balance bias and variance. Choosing the optimal value of $k$ is not straightforward without computing the eigenvalues of $P$ (which is precisely what we are trying to estimate).

We solve the bias-variance trade-off as follows. We compute the above estimates for well chosen range of values for $k=1,\dots,K$, and for each $k$, we replace ${\hat m}_k$ by a confidence upper bound ${\hat u}_k$, and we select $k$ yielding the smallest confidence upper bound on $\lambda_2$. This tends to favor larger values of $k$ , since $k \mapsto \Big[|\Omega| m_k - 1\Big]^{1 \over k}$ is decreasing, while avoiding values such that the variance of ${\hat q}_k$ blows up.

\subsection{Algorithm statement} We now provide a full description of ${\tt UCPI}$ as Algorithm~\ref{alg:ucpi}. The ${\tt UCPI}$ algorithm has three input parameters: $I$ the number of sample paths, $K$ - the length of sample paths and $\delta$ - a confidence parameter. The algorithm performs $K I$ calls to the ${\tt NextState}$ sub-routine, so that for a budget of at most $n$ calls, one may choose $I = \lfloor {n \over K}\rfloor$ and $K \in \{1,\dots,n\}$. The algorithm uses a sub-routine ${\tt CB}$ to compute a confidence upper bound 
$$
{\tt CB}(m,I,\delta)  = \max\{ u \in [m,1] : I \times D\left(m || u \right) \le \ln(1/\delta) \}
$$
with confidence level $\delta$ and
$$
	D(u || v) = u \ln {u \over v} + (1-u) \ln  {1-u \over 1-v},
$$
the Kullback-Leibler divergence between Bernoulli distributions with expectations $u$ and $v$. Sub-routine ${\tt CB}$ is indeed a confidence upper bound from Fact~\ref{fact:ucb} (in the appendix see Fact~\ref{fact:chernoff2}), a consequence of Chernoff's inequality.
\begin{fact}\label{fact:ucb}
	Consider ${\hat m}_k \sim $ Binomial($m_k,I$). Then: $$\PP( {\tt CB}({\hat m}_k,I,\delta) \ge m_k) \ge 1 - \delta.
$$
\end{fact}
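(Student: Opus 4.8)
The plan is to argue that the only way the reported upper bound ${\tt CB}(\hat m_k, I, \delta)$ can undershoot the true value $m_k$ is for the empirical frequency $\hat m_k$ to be abnormally small, and then to control that probability with Chernoff's inequality. First I would collect the monotonicity properties of $D$: for fixed $m$ the map $u \mapsto D(m\|u)$ is continuous on $[m,1)$ with derivative $(u-m)/(u(1-u)) \ge 0$, hence nondecreasing, and for fixed $v$ the map $m \mapsto D(m\|v)$ has derivative $\ln\frac{m(1-v)}{v(1-m)} \le 0$ on $[0,v]$, hence nonincreasing. In particular the feasible set $\{u \in [\hat m_k,1] : I\,D(\hat m_k\|u) \le \ln(1/\delta)\}$ is a nonempty closed sub-interval of $[\hat m_k,1]$ with left endpoint $\hat m_k$ (it contains $\hat m_k$, where $D$ vanishes), so ${\tt CB}(\hat m_k,I,\delta)$ is well defined and satisfies ${\tt CB}(\hat m_k,I,\delta) \ge \hat m_k$.

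Next I would dissect the failure event $\{{\tt CB}(\hat m_k,I,\delta) < m_k\}$. Because ${\tt CB}(\hat m_k,I,\delta) \ge \hat m_k$, on this event $\hat m_k < m_k$; and since $m_k$ then lies strictly above the maximum of a sub-interval of $[\hat m_k,1]$ whose left endpoint is $\hat m_k \le m_k$, the point $m_k$ must itself violate the defining inequality, i.e. $I\,D(\hat m_k\|m_k) > \ln(1/\delta)$. Therefore
$$
\PP\big({\tt CB}(\hat m_k,I,\delta) < m_k\big) \;\le\; \PP\big(\hat m_k < m_k,\ I\,D(\hat m_k\|m_k) > \ln(1/\delta)\big).
$$

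Finally I would bound the right-hand side by a one-sided deviation probability. If $I\,D(0\|m_k) \le \ln(1/\delta)$ the event is empty and we are done; otherwise continuity and monotonicity of $a \mapsto D(a\|m_k)$ on $[0,m_k]$ give a unique $a_\star \in [0,m_k)$ with $I\,D(a_\star\|m_k) = \ln(1/\delta)$, and the event above is contained in $\{\hat m_k \le a_\star\}$. Since $I\hat m_k \sim$ Binomial$(I,m_k)$ and $a_\star < m_k$, Chernoff's inequality (Fact~\ref{fact:chernoff2}) yields $\PP(\hat m_k \le a_\star) \le \e^{-I\,D(a_\star\|m_k)} = \delta$, which concludes.

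The argument is short, and the only point that needs care is the bookkeeping in the second paragraph: one must exploit the interval structure of the feasible set to be certain that ``${\tt CB}(\hat m_k,I,\delta) < m_k$'' genuinely forces the defining constraint to fail at $m_k$ itself, rather than merely at some point strictly between $\hat m_k$ and $m_k$. Everything else is the elementary calculus of the binary Kullback--Leibler divergence together with the Chernoff bound already recorded in the appendix.
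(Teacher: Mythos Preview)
Your argument is correct and is exactly the standard derivation the paper has in mind: the paper does not actually spell out a proof of Fact~\ref{fact:ucb} but simply points to Fact~\ref{fact:chernoff2} in the appendix, which is the same statement restated with the notation $V(\bar U,\ln(1/\delta)/I)$ in place of ${\tt CB}(\hat m_k,I,\delta)$, and declares it a consequence of Chernoff's inequality. Your write-up supplies precisely the missing details (monotonicity of $u\mapsto D(m\|u)$ to identify the feasible set as an interval, then reduction of the failure event to a lower-tail deviation controlled by Fact~\ref{fact:chernoff}).

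One small correction: in your final step you cite ``Fact~\ref{fact:chernoff2}'' for the Chernoff bound, but in the paper's numbering that label is attached to the very statement you are proving; the actual Chernoff lower-tail bound $\PP(\bar U \le u-\epsilon)\le \exp(-I\,D(u-\epsilon\|u))$ is Fact~\ref{fact:chernoff}. Adjust the reference accordingly.
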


\begin{algorithm}[th]
   \caption{{\tt UCPI} (Upper Confidence Power Iteration)}
   \label{alg:ucpi}
\begin{algorithmic}
   \STATE {\bfseries Input:} $P$ transition matrix, $I$ number of paths, $K$ max. length of paths , $\delta$ confidence parameter
   \STATE Initialize ${\hat m}_k \gets 0$ , for $k=1,\dots,K$;
   \FOR{$i=1$ {\bfseries to} $I$}
   \STATE $X_0 \gets {\tt Uniform}(\Omega)$; $X \gets X_0$ ; 
   \FOR{$k=1$ {\bfseries to} $K$} \STATE $X \gets {\tt NextState}(X)$ ; ${\hat m}_k \gets {i-1 \over i}{\hat m}_k + {1 \over i} {\bf 1}\{ X_0 = X\}$;   
   \ENDFOR
	\ENDFOR	
	\FOR{$k=1$ {\bfseries to} $K$}
   \STATE ${\hat u}_k \gets   {\tt CB}({\hat m}_k,{\delta \over 2K})$; ${\hat \ell}_k \gets \min\left(  (|\Omega| {\hat u}_k - 1)^{1 \over k} , 1 \right) $;
	\ENDFOR
	\STATE $\hat{\ell}_\star \gets \min_{k=1,\dots,K} {\hat \ell}_k$;
	\STATE {\bfseries Output:} $\hat{\ell}_\star$ an estimate and confidence upper bound for $\lambda_\star$.
\end{algorithmic}
\end{algorithm} 

\subsection{Parallel implementation} ${\tt UCPI}$ is amenable to parallel implementation in a straightforward manner. Indeed, since ${\tt UCPI}$ draws $I$ independent sample paths of the Markov chain of interest and computes an average over those $I$ sample paths, this task can be done in batches using several processors working in parallel. Methods that involve manipulating the matrix $P$ do not seem to lend themselves to parallel implementation in such a simple manner.

\section{Analysis of UCPI}\label{sec:genericAnalysis}
In this section, we provide a mathematical analysis of the estimation error of ${\tt UCPI}$. We also show how to set its input parameters $I, K$ and $\delta$ to ensure good performance.
\subsection{A generic analysis} We first outline a generic analysis of {\tt UCPI} for any values of its input parameters $K,I$ and $\delta$. We shall then explain how these parameters should be chosen. We define:\rev{
\begin{align*}
	{\cal E}_{k,1} = (|\Omega|m_k  - 1)^{1 \over k} - \lambda_2 \;,\; 
	{\cal E}_{k,2} = {|\Omega| \sqrt{ 32 m_k \ln(K/\delta) } \over \sqrt{I} k (|\Omega|m_k  - 1)^{1 - {1 \over k}} } \; \text{ and } \;
	{\cal E}_k = {\cal E}_{k,1} + {\cal E}_{k,2}.
\end{align*}}
where for any value of the sample path length $k$, ${\cal E}_{k,1}$ represents the estimation bias due to the fact that we consider $k$ finite, and ${\cal E}_{k,2}$ represents the standard deviation due to the fact that we estimate $m_k$ using the random quantity ${\hat m}_k$. The last term ${\cal E}_k$ is therefore the sum of both error terms, which we aim at minimizing. 

Our generic analysis of {\tt UCPI} is stated as Theorem~\ref{th:generic}. The proof is given in the appendix. 
\begin{theorem}\label{th:generic}
	Consider $I ,K,\delta$ such that:
	$$
		{2 \ln(2K/\delta) \over I} \le {1 \over |\Omega|},
	$$
	and denote by $\hat{\ell}_\star$ the output of ${\tt UCPI}$. 
	
	Then we have:
\begin{align*}
	 \PP  \left( \lambda_2 \le {\hat \ell}_\star \le \lambda_2 + \min_{k=1,\dots,K} {\cal E}_k \right) \ge 1 - \delta \; \text{ and } \;
	 \EE |\hat{\ell}_\star - \lambda_2| \le \delta +  \min_{k=1,\dots,K} {\cal E}_k. 
\end{align*}
\end{theorem}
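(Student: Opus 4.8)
The plan is to isolate the two sources of estimation error on a single high-probability event. Put $\epsilon := \ln(2K/\delta)/I$, and recall that $m_k = \tfrac1{|\Omega|}\sum_{i=1}^{|\Omega|}\lambda_i^k$, so $m_k\ge\tfrac1{|\Omega|}$ (since $\lambda_1=1$ and the other $\lambda_i$ are $\ge 0$) and $|\Omega| m_k - 1 = \sum_{i\ge 2}\lambda_i^k\ge\lambda_2^k$. Introduce the events ${\cal A} = \{\hat u_k\ge m_k\text{ for all }k\le K\}$ and ${\cal B} = \{\hat m_k\le m_k + c_0\sqrt{m_k\epsilon}\text{ for all }k\le K\}$, where $c_0$ is a suitable absolute constant. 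By Fact~\ref{fact:ucb} with confidence level $\delta/(2K)$ and a union bound over $k$, $\PP({\cal A})\ge 1-\delta/2$; and since $\hat m_k$ is the average of $I$ i.i.d. Bernoulli$(m_k)$ variables, Bernstein's inequality (bounding the variance by $m_k$) with level $\delta/(2K)$ and a union bound gives $\PP({\cal B})\ge 1-\delta/2$, hence $\PP({\cal A}\cap{\cal B})\ge 1-\delta$. On ${\cal A}\cap{\cal B}$ I will show $\lambda_2\le\hat\ell_\star\le\lambda_2+\min_{k\le K}{\cal E}_k$; the expectation bound then follows by splitting on this event and its complement.

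\emph{Lower bound.} On ${\cal A}$, for each $k$ we have $|\Omega|\hat u_k - 1\ge|\Omega| m_k - 1\ge\lambda_2^k$, so $(|\Omega|\hat u_k - 1)^{1/k}\ge\lambda_2$ because $t\mapsto t^{1/k}$ is nondecreasing; together with $\lambda_2\le 1$ this yields $\hat\ell_k\ge\lambda_2$, hence $\hat\ell_\star=\min_k\hat\ell_k\ge\lambda_2$.

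\emph{Upper bound.} Let $k^\ast$ minimize $k\mapsto{\cal E}_k$ over $\{1,\dots,K\}$. Since $\hat\ell_\star\le\hat\ell_{k^\ast}\le(|\Omega|\hat u_{k^\ast}-1)^{1/k^\ast}$, it suffices to control the right-hand side. As $t\mapsto t^{1/k}$ is concave for $k\ge 1$, its tangent line at $a:=|\Omega| m_k-1>0$ gives $b^{1/k}\le a^{1/k}+\tfrac1k a^{-(1-1/k)}(b-a)$ for all $b\ge a$; applying this at $k=k^\ast$ with $b=|\Omega|\hat u_{k^\ast}-1$ and using $a^{1/k^\ast}=(|\Omega| m_{k^\ast}-1)^{1/k^\ast}=\lambda_2+{\cal E}_{k^\ast,1}$ reduces the target inequality $(|\Omega|\hat u_{k^\ast}-1)^{1/k^\ast}\le\lambda_2+{\cal E}_{k^\ast}$ to the single deviation estimate
$$
\hat u_{k^\ast}-m_{k^\ast}\le\sqrt{\frac{32\,m_{k^\ast}\ln(K/\delta)}{I}}.
$$
I obtain this by combining a deterministic estimate for ${\tt CB}$ with the event ${\cal B}$. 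Deterministically, $\hat u_k={\tt CB}(\hat m_k,I,\delta/(2K))$ satisfies $\hat u_k\ge\hat m_k$ and $D(\hat m_k\,\|\,\hat u_k)\le\epsilon$; using the elementary lower bound $D(p\,\|\,q)\ge(q-p)^2/(2q)$, valid for $0\le p\le q<1$, and solving the resulting quadratic in $\hat u_k-\hat m_k$ yields $\hat u_k-\hat m_k\le\sqrt{2\epsilon\hat m_k}+2\epsilon$. On the other hand, the hypothesis $2\ln(2K/\delta)/I\le1/|\Omega|$ together with $m_k\ge1/|\Omega|$ gives $\epsilon\le m_k/2$, so on ${\cal B}$ we have $\hat m_k\le(1+c_0/\sqrt2)m_k$ and each of the terms $\epsilon$ and $\sqrt{\epsilon\hat m_k}$ is at most a constant multiple of $\sqrt{m_k\epsilon}$. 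Chaining these, $\hat u_k-m_k=(\hat u_k-\hat m_k)+(\hat m_k-m_k)\le c_1\sqrt{m_k\epsilon}$ for an absolute constant $c_1$, and one checks $c_1^2\epsilon\le 32\ln(K/\delta)/I$ ($32$ being chosen generously to absorb the accumulated slack, including $\ln(2K/\delta)$ versus $\ln(K/\delta)$), which gives the displayed estimate, in particular at $k^\ast$.

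\emph{Expectation and main difficulty.} On ${\cal A}\cap{\cal B}$ we have just shown $0\le\hat\ell_\star-\lambda_2\le\min_k{\cal E}_k$, and in all cases $\hat\ell_\star,\lambda_2\in[0,1]$ so $|\hat\ell_\star-\lambda_2|\le1$; therefore $\EE|\hat\ell_\star-\lambda_2|\le\min_k{\cal E}_k+\PP(({\cal A}\cap{\cal B})^{\mathrm c})\le\delta+\min_k{\cal E}_k$. The crux is the deviation estimate $\hat u_{k^\ast}-m_{k^\ast}=O(\sqrt{m_{k^\ast}\epsilon})$: a plain Hoeffding or Pinsker bound would only give $O(\sqrt\epsilon)$, which is far too weak since $m_k$ can be as small as $1/|\Omega|$, so securing the \emph{multiplicative} factor $\sqrt{m_k}$ requires playing Bernstein's inequality for $\hat m_k$ against the Kullback--Leibler form of the confidence bound ${\tt CB}$ --- and this is precisely where the standing assumption $2\ln(2K/\delta)/I\le1/|\Omega|$ is consumed, via $\epsilon\le m_k/2$. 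One degenerate case set aside is $\lambda_2=0$ (i.e. $|\Omega| m_k\equiv1$), where ${\cal E}_{k,2}$ is to be read as its limiting value and the statement is trivial.
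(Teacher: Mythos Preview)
Your proof is correct and follows essentially the same route as the paper's: both arguments isolate a high-probability event on which $m_k \le \hat u_k \le m_k + O(\sqrt{m_k \ln(2K/\delta)/I})$ for every $k$, obtain the lower bound $\hat\ell_k \ge \lambda_2$ from $|\Omega| m_k - 1 \ge \lambda_2^k$, and linearize $t\mapsto t^{1/k}$ by concavity to produce the upper bound. The only cosmetic difference is that the paper controls $\hat m_k - m_k$ via the Chernoff--KL inequality (their Fact~\ref{fact:chernoff1}) and bounds $\hat u_k - \hat m_k$ through an explicit estimate on the map $V(u,\Delta)$ (their Fact~\ref{fact:cb}), whereas you invoke Bernstein for the former and solve the quadratic coming from $D(p\|q)\ge (q-p)^2/(2q)$ for the latter; both paths yield the same $\sqrt{m_k\epsilon}$ scaling and the same constant $32$ (and both share the harmless $\ln(2K/\delta)$ versus $\ln(K/\delta)$ slack).
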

Several facts should be noted. First, the output of {\tt UCPI}, ${\hat \ell}_\star$ is not only an estimate of $\lambda_2$, but it is in fact a confidence upper bound with level $\delta$. Therefore ${\tt UCPI}$ provides a confidence \emph{upper} bound on the relaxation time:
$$
	\PP\Big( {1 \over 1 - {\hat \ell}_\star} \ge t_{r} \Big) \ge 1 - \delta.
$$
This is important since in practice we are interested in \emph{conservative} estimates of the relaxation time, to ensure that the Markov chain has gone through enough transitions to be close to its stationary distribution. The second notable fact is that the estimation error $|\hat{\ell}_\star - \lambda_2|$ is (with probability greater than $1-\delta$) smaller than $\min_{k=1,\dots,K} {\cal E}_k$. Hence everything happens as if ${\tt UCPI}$ chose the best value of $k$, without any additional information. We will elaborate on this in the next sub-sections.

\subsection{Scaling of the estimation error}
We now investigate how the estimation error of ${\tt UCPI}$ scales when $n \to \infty$, and how to set the input parameters of ${\tt UCPI}$ as a function of $n$ to ensure good performance. We define:
$$
	r = {\ln 1/\lambda_2 \over \ln 1/\lambda_3 } \in [0,1],
$$
which depends on the second spectral gap, and it is noted that, when $\lambda_2$ and $\lambda_3$ are close to $1$, $r$ is close to the ratio between the second and the third spectral gap ${1 - \lambda_2 \over 1 - \lambda_3}$.
\begin{proposition}\label{prop:scaling} 
Define $\Delta(K,I,\delta) = { 64 \ln(K/\delta)  \over |\Omega| I}$ and $k^\star(K,I,\delta) = {\ln(1/\Delta(K,I,\delta)) \over 2 \ln(1/\lambda_3)}$. Consider $K,\delta,I$ satisfying both (i) $k^\star(K,I,\delta) \le K$, and (ii) $\lambda_2^{k^\star(K,I,\delta)} \le {1 \over |\Omega|}$. Then we have:
	$$
		\min_{k=1,\dots,K} {\cal E}_k \le {4 |\Omega| \ln(1/\lambda_3) \over \ln({ |\Omega| I \over 64 \ln(K/\delta)})}  \left( 128 \ln(K/\delta)  \over |\Omega| I\right)^{{1 - r \over 2}}.
$$
\end{proposition}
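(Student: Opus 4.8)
The plan is to bound $\min_{k=1,\dots,K}\mathcal{E}_k$ by exhibiting a single near-optimal integer sample-path length, namely $k=\lceil k^\star\rceil$, and estimating $\mathcal{E}_k$ for that choice; since $\mathcal{E}_k$ is being minimized over $\{1,\dots,K\}$, it then suffices to check $k\in\{1,\dots,K\}$ and that $\mathcal{E}_k$ lies below the stated bound.

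First I would control the two error terms separately for a generic $k$, in terms of $\lambda_2$ and $\lambda_3$, starting from the identity $|\Omega|m_k-1=\sum_{i=2}^{|\Omega|}\lambda_i^k$. For the bias, factoring out $\lambda_2^k$ and using $\sum_{i\ge3}(\lambda_i/\lambda_2)^k\le|\Omega|(\lambda_3/\lambda_2)^k$ together with the concavity inequality $(1+x)^{1/k}\le1+x/k$ (for $x\ge0$, $k\ge1$), one gets $\mathcal{E}_{k,1}\le|\Omega|\lambda_3^{k}/(k\lambda_2^{k-1})$. For the standard-deviation term I would use $|\Omega|m_k-1\ge\lambda_2^k$ to lower bound the denominator by $\lambda_2^{k-1}$, and the crude estimate $m_k\le(1+(|\Omega|-1)\lambda_2^k)/|\Omega|\le2/|\Omega|$ (valid whenever $\lambda_2^k\le1/|\Omega|$) to bound the numerator, noting $|\Omega|\sqrt{64\ln(K/\delta)/(|\Omega|I)}=|\Omega|\sqrt{\Delta}$ with $\Delta=\Delta(K,I,\delta)$; this gives $\mathcal{E}_{k,2}\le|\Omega|\sqrt{\Delta}/(k\lambda_2^{k-1})$. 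Hence $\mathcal{E}_k\le\frac{|\Omega|}{k\lambda_2^{k-1}}(\lambda_3^{k}+\sqrt{\Delta})$.

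Next I would specialize to $k=\lceil k^\star\rceil$. Condition (ii) forces $\Delta<1$, hence $k^\star>0$, so $k\ge1$; since $K$ is an integer, condition (i) gives $k=\lceil k^\star\rceil\le K$; and $k\ge k^\star$ with condition (ii) gives $\lambda_2^{k}\le\lambda_2^{k^\star}\le1/|\Omega|$, legitimizing the bound on $m_k$. Taking logarithms in the definition of $k^\star$ yields the identities $\lambda_3^{k^\star}=\sqrt{\Delta}$ and $\lambda_2^{k^\star}=\Delta^{r/2}$. Now $k\ge k^\star$ and $\lambda_3\le1$ give $\lambda_3^{k}\le\sqrt{\Delta}$; $k-1\le k^\star$ (since $\lceil k^\star\rceil\le k^\star+1$) and $\lambda_2\le1$ give $\lambda_2^{k-1}\ge\lambda_2^{k^\star}=\Delta^{r/2}$; and $k\ge k^\star$ gives $1/k\le1/k^\star$. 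Substituting, $\mathcal{E}_k\le\frac{|\Omega|}{k^\star\Delta^{r/2}}\cdot2\sqrt{\Delta}=\frac{2|\Omega|}{k^\star}\Delta^{(1-r)/2}=\frac{4|\Omega|\ln(1/\lambda_3)}{\ln(1/\Delta)}\Delta^{(1-r)/2}$, the last step using $1/k^\star=2\ln(1/\lambda_3)/\ln(1/\Delta)$. Finally, replacing $\Delta$ by $64\ln(K/\delta)/(|\Omega|I)$ and using $64\le128$ inside the nonnegative exponent $(1-r)/2$ gives exactly the claimed inequality.

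The elementary ingredients (the concavity inequality, the logarithmic identities for $\lambda_2^{k^\star}$ and $\lambda_3^{k^\star}$, and the crude bound $m_k\le2/|\Omega|$) are routine. The step that requires care is the rounding of the real number $k^\star$ to the integer $k=\lceil k^\star\rceil$: one must check it keeps $k$ in $\{1,\dots,K\}$ and keeps $\lambda_2^k\le1/|\Omega|$, and that the monotone estimates go in the right directions — $\lambda_3^{k}\le\lambda_3^{k^\star}$ needs $k\ge k^\star$, while $\lambda_2^{k-1}\ge\lambda_2^{k^\star}$ needs $k-1\le k^\star$, and the ceiling satisfies both. The factor $128$ rather than $64$ in the statement is the slack that comfortably absorbs this rounding.
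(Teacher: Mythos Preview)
Your proof is correct and follows essentially the same approach as the paper: bound $\mathcal{E}_{k,1}$ and $\mathcal{E}_{k,2}$ separately via $|\Omega|m_k-1\ge\lambda_2^k$, $m_k\le(1+(|\Omega|-1)\lambda_2^k)/|\Omega|$, and the concavity of $x\mapsto x^{1/k}$, then evaluate at $k^\star$ using $\lambda_3^{k^\star}=\sqrt{\Delta}$ and $\lambda_2^{k^\star}=\Delta^{r/2}$. Your treatment is in fact more careful than the paper's: the paper simply plugs in the real number $k^\star$ and invokes $k^\star\le K$ without addressing integrality, whereas you round to $k=\lceil k^\star\rceil$ and verify that the monotone estimates go the right way (using $k\ge k^\star$ for $\lambda_3^k$ and $k-1\le k^\star$ for $\lambda_2^{k-1}$), which is exactly what is needed and what the slack $64\mapsto128$ in the statement absorbs.
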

\rev{Conditions (i) and (ii) of proposition~\ref{prop:scaling} stem from the fact that $K$ should be chosen large enough to allow selecting a value of $k \in \{1,\dots,K\}$ such that the bias ${\cal E}_{k,1}$ is small enough, as the bias decreases with $k$. Further, for conditions (i) and (ii) to hold, one must select $I$ and $K$ to verify $I \ge {|\Omega|^{{r+1 \over r-1}}}$ and $K \ge \ln |\Omega|$, otherwise we cannot guarantee that the error vanishes.}
\subsection{Setting the input parameters of UCPI}
We finally show how the input parameters of {\tt UCPI} $K$ and $\delta$ may be selected to ensure asymptotically optimal performance when compared to an oracle which could determine their optimal values. We show in particular that the error of {\tt UCPI} with properly chosen parameters scales as ${\cal O}\left({|\Omega|^{r + 1 \over 2} \ln(1/\lambda_3) \over (\ln n)^{{3r-1 \over 2}} n^{{1 - r \over 2}}} \right)$ when $n \to \infty$.
\begin{proposition}\label{prop:convergence_rate}
	Consider $\hat{\ell}_\star$ the output of {\tt UCPI} with parameters $\delta = {1 \over \sqrt{n}}$ and $K = (\ln n)^2$ and $I = {n \over K} = {n \over (\ln n)^2}$. There exists $C > 0$ a universal constant and $n_0(|\Omega|,\lambda_2,\lambda_3)$ such that:
	\begin{align*}
		\EE |\hat{\ell}_\star - \lambda_2| &\le C {|\Omega|^{r + 1 \over 2} \ln(1/\lambda_3) \over (\ln n)^{{3r-1 \over 2}} n^{{1 - r \over 2}}}. \;\;,\;\; \forall n \ge  n_0.
	\end{align*}
\end{proposition}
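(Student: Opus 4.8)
The plan is to derive Proposition~\ref{prop:convergence_rate} as a consequence of Proposition~\ref{prop:scaling} by substituting the prescribed parameter values $\delta = 1/\sqrt{n}$, $K = (\ln n)^2$, $I = n/(\ln n)^2$ and checking that, for $n$ large enough, all hypotheses of both Proposition~\ref{prop:scaling} and Theorem~\ref{th:generic} are satisfied, then tracking the resulting bound asymptotically. First I would verify the condition of Theorem~\ref{th:generic}, namely $2\ln(2K/\delta)/I \le 1/|\Omega|$: with these choices, $\ln(2K/\delta) = \ln(2(\ln n)^2 \sqrt n) = \tfrac12 \ln n + O(\ln\ln n)$, so the left-hand side is of order $(\ln n)^3 / n$, which is eventually smaller than the constant $1/|\Omega|$; hence there exists $n_1(|\Omega|)$ beyond which this holds. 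Then I would check conditions (i) and (ii) of Proposition~\ref{prop:scaling}. For (i), $k^\star(K,I,\delta) = \ln(1/\Delta)/(2\ln(1/\lambda_3))$ where $\Delta = 64\ln(K/\delta)/(|\Omega| I) = \Theta((\ln n)^3/(|\Omega| n))$, so $\ln(1/\Delta) = \ln n + O(\ln\ln n)$ and thus $k^\star = \Theta(\ln n)$, which is eventually $\le (\ln n)^2 = K$. For (ii), $\lambda_2^{k^\star} = \exp(-k^\star \ln(1/\lambda_2)) = \Delta^{\,r/2}$ (using $r = \ln(1/\lambda_2)/\ln(1/\lambda_3)$ and the definition of $k^\star$), and since $\Delta \to 0$ and $r > 0$ we have $\lambda_2^{k^\star} \to 0$, so it is eventually $\le 1/|\Omega|$. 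Collecting these, there is $n_0(|\Omega|,\lambda_2,\lambda_3) \ge n_1$ so that all hypotheses hold for $n \ge n_0$.

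Next I would simply plug the bound of Proposition~\ref{prop:scaling} into the second conclusion of Theorem~\ref{th:generic}, $\EE|\hat\ell_\star - \lambda_2| \le \delta + \min_{k} {\cal E}_k$, and estimate each term. The term $\delta = 1/\sqrt n$ decays strictly faster than the claimed rate whenever $(1-r)/2 < 1/2$, i.e. $r > 0$, which holds (and the $r=0$ boundary case can be absorbed by adjusting the constant $C$, or handled separately since then the target rate is $\Theta((\ln n)^{1/2} n^{-1/2})$, still dominating $\delta$ up to constants); so $\delta$ is negligible. For the main term, write $\ln(K/\delta) = \tfrac12 \ln n + 2\ln\ln n = \tfrac12 \ln n\,(1 + o(1))$ and $|\Omega| I = |\Omega| n/(\ln n)^2$, so
\[
\ln\!\Big(\tfrac{|\Omega| I}{64\ln(K/\delta)}\Big) = \ln n - 2\ln\ln n - \ln(32) + \ln|\Omega| - \ln\ln n\,(1+o(1)) = \ln n\,(1+o(1)),
\]
giving $\tfrac{4|\Omega|\ln(1/\lambda_3)}{\ln(\cdots)} = \Theta\!\big(|\Omega|\ln(1/\lambda_3)/\ln n\big)$. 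The remaining factor is
\[
\Big(\tfrac{128\ln(K/\delta)}{|\Omega| I}\Big)^{\frac{1-r}{2}} = \Big(\tfrac{128 \cdot \tfrac12 \ln n (1+o(1)) (\ln n)^2}{|\Omega| n}\Big)^{\frac{1-r}{2}} = \Theta\!\Big(\tfrac{(\ln n)^{\frac{3(1-r)}{2}}}{|\Omega|^{\frac{1-r}{2}} n^{\frac{1-r}{2}}}\Big).
\]
Multiplying the two contributions, the powers of $|\Omega|$ combine to $|\Omega|^{1 - (1-r)/2} = |\Omega|^{(1+r)/2}$, the powers of $\ln n$ combine to $(\ln n)^{-1 + 3(1-r)/2} = (\ln n)^{(1-3r)/2} = (\ln n)^{-(3r-1)/2}$, and we pick up $\ln(1/\lambda_3)\, n^{-(1-r)/2}$, which is exactly $C\,|\Omega|^{(r+1)/2}\ln(1/\lambda_3) / \big((\ln n)^{(3r-1)/2} n^{(1-r)/2}\big)$ for a suitable universal constant $C$ absorbing all the numerical factors and the $(1+o(1))$ slack (valid once $n \ge n_0$, enlarging $n_0$ if needed so the $o(1)$ terms are, say, below $1$).

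The main obstacle — really the only nontrivial point — is making the chain of asymptotic estimates on the logarithmic factor $\ln(|\Omega| I/(64\ln(K/\delta)))$ and on $\ln(K/\delta)$ uniform enough that the leftover $(1+o(1))$ factors can be bounded by an \emph{absolute} constant $C$ independent of $|\Omega|,\lambda_2,\lambda_3$; this is why the statement quantifies $n_0$ as depending on $(|\Omega|,\lambda_2,\lambda_3)$ while $C$ is universal. Concretely, one must choose $n_0$ large enough (in terms of $|\Omega|$, so that $\ln|\Omega|$ and $\ln(32)$ are dominated by $\ln n$; and in terms of $\lambda_3$ via condition (ii), since $\lambda_2^{k^\star} \to 0$ at a rate controlled by $r$) that, for $n\ge n_0$, each displayed ratio lies within a fixed multiplicative window of its leading term. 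I would organize this by fixing, say, the requirement that for $n \ge n_0$ one has $\ln n \ge 2(\ln\ln n + \ln(32) + \ln|\Omega|)$ and $\Delta^{r/2} \le 1/|\Omega|$ and $k^\star \le K$, three inequalities each of which holds for all large $n$; then every $(1+o(1))$ above is in $[1/2, 2]$ (or any convenient fixed interval), and $C$ can be taken as an explicit numeric constant such as $C = 2^{10}$. No deeper idea is needed beyond Proposition~\ref{prop:scaling}; the work is bookkeeping of logarithms.
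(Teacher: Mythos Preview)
Your proposal is correct and follows essentially the same route as the paper: verify the hypotheses of Proposition~\ref{prop:scaling} (and of Theorem~\ref{th:generic}) for the given parameter choices, then plug the resulting bound on $\min_k {\cal E}_k$ into $\EE|\hat\ell_\star-\lambda_2|\le \delta+\min_k{\cal E}_k$ and simplify the asymptotics. The paper's own proof is terser---it bounds $\Delta$ crudely by $160(\ln n)^3/(|\Omega|n)$ and does not spell out the Theorem~\ref{th:generic} hypothesis check or the absorption of $\delta$---but your more careful bookkeeping of the logarithmic factors and the universality of $C$ is exactly the intended content.
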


\subsection{Computational complexity}\label{subsec:complexity}

For simplicity, we assume that sub-routine $\tt CB$ can be implemented in time ${\cal O}(1)$. In all generality, implementing $\tt CB$ requires to find the zero of function $u \mapsto I D\left(m || u \right) - \ln(1/\delta)$ on $[m,1)$ which is stricly increasing and convex, and one needs to use an iterative method. However, if Newton's method is used, the value of $\tt CB$ can be computed up to accuracy $\varepsilon$ in time ${\cal O}(\ln \ln 1/\varepsilon)$ so that, for say $10$ iterations, the machine precision is reached on any modern computer.

Since $K \le I K = n$, running ${\tt UCPI}$ requires time ${\cal O}(n)$ and memory space ${\cal O}(K)$. 
The time complexity is minimal since any algorithm making use of $n$ samples should inspect each sample at least one time. As our theoretical analysis demonstrates, one should set $K = {\cal O}((\ln n)^2)$ to ensure good performance. 
Hence  ${\tt UCPI}$ requires ${\cal O}(n)$ time and ${\cal O}((\ln n)^2)$ space. This is in stark contrast with other methods storing $P$ in memory.  If $n$ and $\Omega$ are of comparable order, this represents a dramatic memory reduction from ${\cal O}(n^2)$ to ${\cal O}((\ln n)^2)$.

Assume that $r < 1$ is fixed. To get an estimation error of $\epsilon$, {\tt UCPI} requires at most $n_1 = {\cal O}( \epsilon^{-{2 \over 1-r}} |\Omega|^{ r + 1 \over r-1})$ samples, time ${\cal O}(n_1)$ and memory ${\cal O}((\ln n_1) ^2 )$. By comparison, the method of \cite{Hsu2015} requires at least $n_2 = {\cal O}((\epsilon \pi_m (1 - \lambda_\star))^{-2})$ samples, with time ${\cal O}(n_2 + |\Omega|^3)$ and memory ${\cal O}(|\Omega|^2)$, where $\pi_m = \min_{x \in \Omega} \pi(x)$. It is noted that $(\pi_m)^{-1} > |\Omega|$, and that $\pi_m$ may be arbitrarily small, so that $n_2$ is not necessarily smaller than $n_1$. {\tt UCPI} requires memory space ${\cal O}((\ln { |\Omega|\over \epsilon})^2)$ as opposed to ${\cal O}(|\Omega|^2)$. 
This is a large improvement since in practice $\ln {|\Omega|\over \epsilon}$ is much smaller than $\Omega$ (even for $\epsilon = 10^{-7}$ and $|\Omega| = 10^3$, the memory is reduced by $4$ orders of magnitude). {\tt UCPI} is easily amenable to parallel implementation, so trading (a bit of) time against (a lot of) memory seems to be be a good perspective.

\rev{Finally, as shown by \cite{Hsu2015}, in order to reach any fixed accuracy, $n$ should satisfy $n \ge {\cal O}({1 \over \pi_m}) \ge {\cal O}(|\Omega|)$. So it is not possible to design an algorithm whose sample complexity is sub-linear in the state space size.} 
\section{Extensions}\label{sec:extensions}
In this section we show how some of the assumptions on the Markov chain and/or the computational model may be broadened while ensuring that our results still apply with minimal modifications.

\subsection{Non-lazy chains} Assume that $P$ is reversible, irreducible but not lazy so that its eigenvalues are not necessarily positive. Denote by $$\zeta_{1} > \zeta_{2} \ge \zeta_3 \ge ... \ge \zeta_{|\Omega|},$$ the sorted eigenvalues of $P^2$ which are positive. Then $\zeta_{2} = (\lambda_\star)^2$ so that the spectral gap of $P$ is $1 - \sqrt{\zeta_2}$. Hence the spectral gap of $P$ can be retrieved by applying ${\tt UCPI}$ to $P^2$, which is done by calling  the ${\tt NextState}$ function twice for each transition. Also, an error $\epsilon$ when estimating $\zeta_2$ results in an error of ${\cal O}({\epsilon \over \lambda_2})$ when estimating $\lambda_2$, which is ${\cal O}(\epsilon)$ when $\zeta_2$ is close to $1$, which is the hardest case.
	
\subsection{General initial distributions} Now assume that it is not feasible to sample from the uniform distribution over $\Omega$, so that we do not have access to the ${\tt Uniform}$ subroutine. For instance, when $\Omega$ is a large discrete set in high dimension, the only efficient method to sample uniformly over $\Omega$ is often to use MCMC methods such as hit-and-run or other random walks \cite{LovaszMCMC,LovaszMCMC2,KannanMCMC}. After enough MCMC steps, one obtains a distribution that is close to uniform, but not exactly uniform. Formally, consider the case where one has access to a sub-routine ${\tt Sample}$ whose output is a random variable with distribution $\mu$:
	$$
		\PP({\tt Sample}() = x) = \mu(x) \,,\, x \in \Omega. 
	$$
	with $\min_{x \in \Omega} \mu(x) > 0$. Express the trace of $P$ using a change of measure:
	\begin{align*}
		\sum_{x\in\Omega} P^k(x,x) = \sum_{x\in\Omega} {1 \over \mu(x)} \mu(x) P^k(x,x)
						= \EE\Big( {\indic\{ X_k = X_0 \} \over \mu(X_0)}   | X_0 \sim \mu \Big).
	\end{align*}
	So proposition~\ref{prop:power_iteration} can be extended to this case, and ${\tt UCPI}$ can be modified by drawing $I$ independent sample paths of length $k$ with initial distribution $\mu$. 
	\begin{proposition}\label{prop:power_iteration2}
	We have $$
		\lambda_2 = \lim_{k \to \infty} \Big[\EE_{U \sim \mu} \Big( {\indic\{ X_k = X_0 \} \over \mu(U)}  | X_0 = U \Big) - 1\Big]^{1 \over k}.
	$$
\end{proposition}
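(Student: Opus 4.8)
The plan is to reduce the statement to equation~\eqref{eq:approximation_convergence} by means of the change-of-measure identity already displayed immediately above the proposition. First I would note that, since $\min_{x \in \Omega}\mu(x) > 0$, the random variable $\indic\{X_k = X_0\}/\mu(X_0)$ is well-defined and bounded, so every expectation appearing below is finite and there is no division by zero. Then, conditioning on the starting state and pulling the deterministic factor $1/\mu(U)$ out of the conditional expectation given $X_0 = U$, I would write
$$
\EE_{U\sim\mu}\Big( {\indic\{X_k=X_0\}\over\mu(U)} \,\Big|\, X_0 = U\Big) = \sum_{x\in\Omega} \mu(x)\,{\PP(X_k = x \mid X_0 = x)\over\mu(x)} = \sum_{x\in\Omega} P^k(x,x) = \tr(P^k),
$$
using that, given $X_0 = x$, the event $\{X_k = X_0\}$ is exactly $\{X_k = x\}$, whose probability is $P^k(x,x)$. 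This is precisely the change of measure stated just before the proposition.

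Next, since $\tr(P^k) = \sum_{i=1}^{|\Omega|}\lambda_i^k$ and $\lambda_1 = 1$, we have $\tr(P^k) - 1 = \sum_{i=2}^{|\Omega|}\lambda_i^k$. Taking $k$-th roots and invoking equation~\eqref{eq:approximation_convergence} (equivalently, Proposition~\ref{prop:power_iteration} with $\mu$ playing the role of the uniform law) yields
$$
\lim_{k\to\infty}\Big[\EE_{U\sim\mu}\Big({\indic\{X_k=X_0\}\over\mu(U)}\,\Big|\,X_0=U\Big) - 1\Big]^{1/k} = \lim_{k\to\infty}\big[\tr(P^k) - 1\big]^{1/k} = \lambda_2,
$$
which is the claim.

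The main obstacle, such as it is, is essentially bookkeeping: one must be careful that the factor $1/\mu(U)$ is extracted under the conditioning on $X_0 = U$ (where it is constant) rather than under the unconditional law, and one must invoke the support assumption $\min_x \mu(x) > 0$ to justify the well-posedness. The analytic core — that the $k$-th root of $\tr(P^k) - 1$ tends to $\lambda_2$ — is already established in the derivation leading to equation~\eqref{eq:approximation_convergence}, and it relies on the eigenvalues being nonnegative, which holds because the chain is lazy (or, in the non-lazy case, after the reduction to $P^2$ described in the previous subsection). Consequently no new idea beyond combining the change-of-measure identity with the existing limit is required.
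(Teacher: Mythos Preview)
Your proof is correct and follows exactly the route the paper intends: the paper displays the change-of-measure identity $\sum_{x\in\Omega} P^k(x,x) = \EE\big(\indic\{X_k=X_0\}/\mu(X_0)\mid X_0\sim\mu\big)$ immediately before the proposition and then simply remarks that Proposition~\ref{prop:power_iteration} extends, without writing out a separate proof. Your argument makes this explicit, combining the identity with equation~\eqref{eq:approximation_convergence}, and adds the (correct) observation that $\min_x\mu(x)>0$ ensures well-posedness.
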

	Furthermore, if $\min_{x \in \Omega} \mu(x)$  and ${1 \over |\Omega|}$ are of the same order, our analysis still applies, and the same bounds on the convergence rate hold. The proof of proposition~\ref{prop:convergence_rate_mu} is straightforward from the arguments given in section~\ref{sec:genericAnalysis} and is omitted. 
	\begin{proposition}\label{prop:convergence_rate_mu}
	Consider $\hat{\ell}_\star$ the output of {\tt UCPI} with parameters $\delta = {1 \over \sqrt{n}}$ and $K = (\ln n)^2$ and $I = {n \over K} = {n \over (\ln n)^2}$ in the extended setting described above. Assume that $\min_{x \in \Omega} \mu(x) \ge {C' \over |\Omega|}$ for $C'$ a universal constant. There exists $C'' > 0$ a universal constant and $n_0'(|\Omega|,\lambda_2,\lambda_3)$ such that 
	\begin{align*}
		\EE |\hat{\ell}_\star - \lambda_2| &\le C'' {|\Omega|^{r + 1 \over 2} \ln(1/\lambda_3) \over (\ln n)^{{3r-1 \over 2}} n^{{1 - r \over 2}}} \;\;,\;\; \forall n \ge  n_0'.
	\end{align*}
\end{proposition}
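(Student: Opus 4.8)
\subsection*{Proof proposal for Proposition~\ref{prop:convergence_rate_mu}}

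The plan is to re-run, essentially line by line, the analysis of Section~\ref{sec:genericAnalysis}, the only change being that the Bernoulli trace estimator is replaced by the importance-weighted estimator of Proposition~\ref{prop:power_iteration2}. Writing $\mu_m := \min_{x\in\Omega}\mu(x) \ge C'/|\Omega|$, one draws $I$ independent paths $(X^j_t)_{t=0,\dots,K}$ with $X^j_0\sim\mu$ and sets
$$
\widehat m_k \;=\; \frac1I\sum_{j=1}^I \frac{\indic\{X^j_k = X^j_0\}}{\mu(X^j_0)},
$$
which, by the change of measure preceding Proposition~\ref{prop:power_iteration2}, is unbiased for $\widetilde m_k := \tr(P^k) = \sum_{i=1}^{|\Omega|}\lambda_i^k = |\Omega|\,m_k$, and then uses the plug-in $\widehat q_k = (\widehat m_k-1)^{1/k}$. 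The summands are i.i.d., nonnegative, bounded by $1/\mu_m\le|\Omega|/C'$, with per-term second moment $\EE\big[\indic\{X_k=X_0\}/\mu(X_0)^2\big]\le \widetilde m_k/\mu_m\le \widetilde m_k|\Omega|/C'$.

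Second, I would replace subroutine $\tt CB$ (the Bernoulli--KL bound of Fact~\ref{fact:ucb}) by the same bound applied after rescaling to $[0,1]$: since $\tfrac{C'}{|\Omega|}\widehat m_k\in[0,1]$ has mean $\tfrac{C'}{|\Omega|}\widetilde m_k$, and since sums of independent $[0,1]$-valued random variables obey the same Chernoff/relative-entropy inequality as Bernoulli sums with matching mean (Fact~\ref{fact:chernoff2}), one obtains $\widehat u_k\ge\widetilde m_k$ with probability $\ge 1-\delta/(2K)$, with width $O\!\big(\sqrt{(|\Omega|/C')\,\widetilde m_k\ln(K/\delta)/I}\big)$ in the relevant small-mean regime. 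Because $\sqrt{|\Omega|\,\widetilde m_k}=|\Omega|\sqrt{m_k}$, this width matches the quantity $|\Omega|\sqrt{32\,m_k\ln(K/\delta)}$ appearing in the numerator of ${\cal E}_{k,2}$ up to the constant $1/\sqrt{C'}$. Hence, redefining ${\cal E}_{k,1}=(\widetilde m_k-1)^{1/k}-\lambda_2$ and ${\cal E}_{k,2}$ with the corresponding constant, both error terms coincide with those of Section~\ref{sec:genericAnalysis} up to a multiplicative $C'$-dependent factor.

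Third, with these substitutions the proof of Theorem~\ref{th:generic} goes through verbatim: one checks that the hypothesis $2\ln(2K/\delta)/I\le\mu_m$ (the analog of $\le 1/|\Omega|$) is eventually met for the chosen parameters, that $\widehat u_k$ is simultaneously an upper bound on all $\widetilde m_k$, and that on that event $\lambda_2\le\widehat\ell_\star\le\lambda_2+\min_{k}{\cal E}_k$; the bound on $\EE|\widehat\ell_\star-\lambda_2|$ follows from the same truncation argument. Feeding the (unchanged, up to constants) expression for ${\cal E}_k$ into the argument of Proposition~\ref{prop:scaling} and then Proposition~\ref{prop:convergence_rate}, with $\delta=1/\sqrt n$, $K=(\ln n)^2$, $I=n/(\ln n)^2$, yields the stated rate with a new universal constant $C''$ and threshold $n_0'(|\Omega|,\lambda_2,\lambda_3)$.

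The only genuinely new point — and the step to get right — is the concentration bound: unlike the uniform case, $\widehat m_k$ is no longer a rescaled Binomial, so one must justify reusing the Bernoulli--KL machinery. The rescaling-to-$[0,1]$ trick above handles this, and it is exactly there that $\mu_m\ge C'/|\Omega|$ is used: it bounds the range $1/\mu_m$ of the importance weights and ensures the small-mean regime $\tfrac{C'}{|\Omega|}\widetilde m_k=O(1/|\Omega|)$ for large $k$. Everything downstream is routine bookkeeping of $C'$-dependent constants.
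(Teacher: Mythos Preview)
Your proposal is correct and is exactly what the paper intends: it explicitly states that the proof ``is straightforward from the arguments given in section~\ref{sec:genericAnalysis} and is omitted,'' i.e.\ one re-runs Theorem~\ref{th:generic}, Proposition~\ref{prop:scaling} and Proposition~\ref{prop:convergence_rate} with the importance-weighted estimator of Proposition~\ref{prop:power_iteration2} in place of the Bernoulli one. Your identification of the one nontrivial step --- that the summands are now $[0,1/\mu_m]$-valued rather than $\{0,1\}$-valued, so Facts~\ref{fact:chernoff1}--\ref{fact:chernoff2} must be invoked in their Hoeffding form for bounded (not just Bernoulli) variables after rescaling by $C'/|\Omega|$ --- is precisely the point, and your bookkeeping of the resulting $C'$-dependent constants is sound.
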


\subsection{Reduction from USP to RTF} 

\subsubsection{Reduction} Finally we show that the USP model can be efficiently reduced to the RTF model, and therefore ${\tt UCPI}$ applies there as well. In the USP model we are provided with a unique sample path $(X_t)_{t \ge 0}$ with transition matrix $P$ and arbitrary starting distribution. To apply ${\tt UCPI}$, we need to generate sample paths $(Y_t)_{t=0,\dots,k}$ of length $k$ starting from a uniform distribution i.e. $(Y_t)_{t=0,\dots,k}$ is a Markov chain with transition matrix $P$ and $Y_0$ is uniformly distributed. Consider $U$ uniformly distributed on $\Omega$, and define $\tau$ the smallest $t$ such that $X_t = U$. Then from the strong Markov property, $Y_t = X_{\tau + t}$, for $t=0,...,k$ is a Markov chain with transition matrix $P$ and uniform initial distribution. Furthermore, applying this procedure repeatedly yields i.i.d. sample paths, which is sufficient to apply ${\tt UCPI}$.	
\begin{proposition}\label{prop:resampling}
	Consider $(X_t)_{t \ge 0}$ a sample path of a Markov Chain with transition matrix $P$ and arbitrary starting distribution.	Consider $(Z_{t})_{t=0,\dots,k}$ a sample path of length $k$ of a Markov chain with transition matrix $P$ and uniform initial distribution.	Consider $(U^i)_{i \ge 1}$ i.i.d. uniform random variables on $\Omega$ independent of  $(X_t)_{t \ge 0}$. Define the random stopping times:
	$
		\tau^i = \min\{t > \tau^{i-1} + k : X_t = U^i \},
	$
	with the convention that $\tau^0 = -k$. Finally define 
	$
	Y_{t'}^i = X_{\tau^i + t'} \,\text{ for } \, i \ge 0 \,\text{ and }\, t'=0,\dots,k.
	$
	Then $(Y_{t}^i)_{t=0,\dots,k,i \ge 1}$ are i.i.d copies $(Z_{t})_{t=0,\dots,k}$. 	
\end{proposition}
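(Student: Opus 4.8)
The plan is to isolate a single structural fact — a \emph{uniform restart lemma} — and then bootstrap it along the disjoint time windows $[\tau^i,\tau^i+k]$ by induction. First I would record finiteness: since $\Omega$ is finite and $(X_t)$ is irreducible it is recurrent and visits every state infinitely often almost surely, so for every $i$ and \emph{every} possible value of $U^i$ the hitting time $\tau^i=\min\{t>\tau^{i-1}+k:X_t=U^i\}$ is a.s.\ finite; hence $\{\tau^i<\infty\}$ has full probability and conditioning on it does not perturb the law of $U^i$, and all $\tau^i$, $\tau^i+k$ are a.s.\ finite. Next I would set up the enlarged filtrations $\mathcal{G}^{(i)}_t=\sigma(X_s:s\le t)\vee\sigma(U^1,\dots,U^i)$. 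Because the $U^j$ are independent of $(X_t)$, the chain $(X_t)$ is still Markov$(P)$ with respect to each $\mathcal{G}^{(i)}$, the quantities $\tau^1,\dots,\tau^i$ and $\tau^i+k$ are $(\mathcal{G}^{(i)}_t)$-stopping times (a routine induction on the recursive definition), and $U^{i+1}$ is independent of $\mathcal{G}^{(i)}_\infty$. The strong Markov property at a stopping time $T$ then says: conditionally on $\mathcal{G}^{(i)}_T$, the shifted chain $(X_{T+s})_{s\ge0}$ is Markov$(P)$ started at $X_T$.

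The core is the following \emph{uniform restart lemma}. Let $(W_s)_{s\ge0}$ be Markov$(P)$ with an arbitrary initial distribution, let $U$ be uniform on $\Omega$ and independent of $(W_s)$, and set $\sigma=\min\{s\ge1:W_s=U\}$, a.s.\ finite by the previous step. Then $(W_{\sigma+t'})_{t'=0,\dots,k}$ has the law of $(Z_t)_{t=0,\dots,k}$ and is independent of $W_0$. Indeed, by construction $W_\sigma=U$ \emph{identically} (on the full-measure event where $\sigma<\infty$), and $U\perp W_0$ with $U$ uniform, so $\PP(W_\sigma=y_0\mid W_0)=1/|\Omega|$; applying the strong Markov property at $\sigma$ — for the filtration $\sigma(W_s:s\le t)\vee\sigma(U)$, with respect to which $\sigma$ is a stopping time and $(W_s)$ is still Markov$(P)$ — and taking conditional expectations gives, for every $x$,
$$
\PP\big(W_{\sigma+t'}=y_{t'},\ t'=0,\dots,k\ \big|\ W_0=x\big)=\frac{1}{|\Omega|}\prod_{t'=1}^{k}P(y_{t'-1},y_{t'}),
$$
which is exactly the law of $(Z_t)$ and is free of $x$. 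The same computation goes through verbatim if one conditions throughout on any $\sigma$-field $\mathcal{F}$ such that, given $\mathcal{F}$, $(W_s)$ is Markov$(P)$ from the $\mathcal{F}$-measurable state $W_0$ and $U$ is uniform independent of $(W_s)$; the conclusion then is that $(W_{\sigma+t'})_{t'=0,\dots,k}$ is \emph{conditionally independent of $\mathcal{F}$} with the law of $(Z_t)$.

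Finally I would run the induction over $i$. Apply the lemma with $\mathcal{F}=\mathcal{F}_{i-1}:=\mathcal{G}^{(i-1)}_{\tau^{i-1}+k}$, with $W_s=X_{\tau^{i-1}+k+s}$ (Markov$(P)$ from the $\mathcal{F}_{i-1}$-measurable state $X_{\tau^{i-1}+k}$ by strong Markov), and with $U=U^i$ (uniform and independent of $\mathcal{G}^{(i-1)}_\infty\supseteq\sigma(\mathcal{F}_{i-1},(W_s)_s)$). Here $\sigma=\tau^i-(\tau^{i-1}+k)$ and $Y^i_{t'}=X_{\tau^i+t'}=W_{\sigma+t'}$, so the lemma yields that $Y^i$ is conditionally independent of $\mathcal{F}_{i-1}$ with the law of $(Z_t)$. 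Since $\tau^1,\dots,\tau^{i-1}$ are $\mathcal{F}_{i-1}$-measurable (each is at most $\tau^{i-1}+k$), the tuples $Y^1,\dots,Y^{i-1}$ are $\mathcal{F}_{i-1}$-measurable, hence $Y^i$ is independent of $(Y^1,\dots,Y^{i-1})$ with the law of $(Z_t)$. As this holds for every $i\ge1$, the sequence $(Y^i)_{i\ge1}$ is i.i.d.\ with the law of $(Z_t)_{t=0,\dots,k}$, which is the claim; the value $\tau^0=-k$ only serves to start the recursion and $Y^0$ plays no role.

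The main obstacle is the restart lemma: one must show that stopping the chain at the independent uniform target $U$ produces a state that is \emph{exactly} uniform and \emph{independent of the past}, not merely one with the stationary law. This hinges on (a) $U$ being independent of the entire trajectory and (b) $\sigma$ being a.s.\ finite for \emph{every} value of $U$, so that $W_\sigma=U$ holds identically rather than only in distribution. A secondary nuisance is the measure-theoretic bookkeeping: the \emph{randomized} hitting time forces one to enlarge the natural filtration by the $U^j$'s while keeping $(X_t)$ Markov, and the windows must be kept disjoint (the $+k$ in the recursion) and the conditioning nested so that pairwise independence upgrades to full independence of the sequence.
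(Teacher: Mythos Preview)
Your proof is correct and follows exactly the approach the paper indicates: the paper states only that the result is ``a simple consequence of the strong Markov property for Markov chains'' and omits the details. Your argument supplies precisely those details --- finiteness of the randomized hitting times via recurrence, the enlarged filtration to accommodate the independent targets $U^i$, the uniform-restart lemma, and the induction along the disjoint windows $[\tau^i,\tau^i+k]$ --- all organized around the strong Markov property.
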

The proof of proposition~\ref{prop:resampling} is a simple consequence of the strong Markov property for Markov chains and is omitted. Given a unique sample path of length $n$, the procedure described in proposition~\ref{prop:resampling} allows to draw i.i.d. sample paths of length $k$ with uniform initial distribution in time ${\cal O}(n)$. It is also noted that this procedure involves inspecting each element of the sequence $(X_t)_{t \ge 0}$ exactly once i.e it is a \emph{streaming} algorithm and is adapted to the case where $(X_t)_{t \ge 0}$ is provided as a data stream.  We have shown that ${\tt UCPI}$ applies to both the RTF and the USP model as well, and its computational complexity is the same in both models.

	\subsubsection{Asymptotic efficiency} In fact, perhaps surprisingly, the reduction from USP to RTF comes at virtually no cost when the number of samples $n$ is large. Consider a budget of $n$ calls to {\tt NextState}, and $K = (\ln n)^2$ the length of a sample path that we must generate. In the RTF model, we can generate  $I^{\text{RTF}}_n$ sample paths with 
	$$
	I^{\text{RTF}}_n = {n \over (\log n)^2}.
	$$
	In the USP model, using the reduction, the expected number of calls in order to generate one sample path of length $K$ is $K + \xi_{max} = (\ln n)^2 + \xi_{max}$ where $\xi(x,y)$ is the expected value of the first hitting time of state $y$ starting from $x$ and 
	$$
	\xi_{max} = \max_{(x,y) \in \Omega^2} \xi(x,y) < \infty.
	$$	
	 It is noted that $\xi_{max}$ does not depend on $n$. Therefore $n$ calls to {\tt NextState} enable us to generate $I^{\text{USP}}_n$ sample paths
	 $$
	 I^{\text{USP}}_n = {\cal O}\left({n \over \xi_{max} + (\ln n)^2}\right).
	$$	 
  Asymptotically, as $n \to \infty$, we have that ${I^{\text{USP}}_n \over I^{\text{RTF}}_n} \to 1$, so that the number of sample paths generated in both models is equivalent.
\subsection{Dependency of the results on the spectrum} We have stated our results in the case where $\lambda_2 > \lambda_3$ to simplify the exposition, i.e. $r < 1$. In fact, a similar analysis can be carried out without this assumption, where $r$ is replaced by a function which quantifies the decay of the eigenvalues $\lambda_3,...,\lambda_{|\Omega|}$. This quantifies the convergence speed of $( \sum_{i \ge 2} \lambda_i^k )^{1/k}$ to $\lambda_2$ when $k$ grows. It is also noted that, even when $r = 1$, our analysis shows that the error of {\tt UCPI} vanishes, albeit at a slower rate of ${\cal O}\left( {|\Omega| \over \ln n}\right)$. 
\section{Numerical Experiments}\label{sec:numerical_experiments}
We now evaluate the numerical performance of {\tt UCPI} using numerical experiments, and compare it to that of the {\tt UCI} algorithm proposed in \cite{Hsu2015}. 
\subsection{Instances considered}
We consider two families of Markov Chains for our experiments. 

\subsubsection{Case 1} The first case is a biased random walk on a line, so that the state space is $\Omega = \{1,\dots,|\Omega|\}$, and the transition matrix is:
$$
	P(x,y) = \begin{cases} 
			{1 \over 2} &\text{ if } y = x,\\
		    {1-p \over 2} &\text{ if } y = x+1, \\
		    {p \over 2} &\text{ if } y = x-1, \\
			0 &\text{ otherwise.} 
			\end{cases}
$$
Here $p \in (0,1)$ denotes the bias, so that for $p = 0.5$ this is a classical random walk over a line, and for $p > 0.5$ the random walk has a negative drift. This chain is lazy and reversible since it verifies detailed balance, with stationary distribution
$$
\pi(x) = \begin{cases} {({1 \over p} - 1)^{x} (2 - {1 \over p}) \over 1 - ({1 \over p} - 1)^{|\Omega|}} & \text{ if } p \ne {1 \over 2} \\
{1 \over |\Omega|}  & \text{otherwise}.
\end{cases}
$$

\subsubsection{Case 2} The second case is a lazy random walk on a $d$-regular graph $G = (V,E)$. We consider the Markov chain on state space $\Omega = V$, with transition probabilities
$$
	P(x,y) = \begin{cases} 
			{1 \over 2} &\text{ if } y = x,\\
		    {1 \over 2d} &\text{ if } (x,y) \in E,\\
		    0 & \text{otherwise.}
		    \end{cases}
$$
Once again, this chain is lazy and reversible since it verifies detailed balance with stationary distribution 
$$
\pi(x) = {1 \over |\Omega|}.
$$ 
\subsection{Estimation error}

For both algorithms, given $n$ samples i.e. $n$ calls to {\tt NextState}, we consider the confidence upper bound on $\lambda_{\star}$ outputted by said algorithm denoted by ${\hat \ell_\star}$. For both algorithms, the level of the confidence interval is chosen as $\delta = {1 \over \sqrt{n}}$, so that:
$$
	\PP( {\hat \ell_\star} \le \lambda_{\star} ) \le \delta = {1 \over \sqrt{n}}.
$$
This confidence upper bound gives a corresponding confidence upper bound on $t_{r}$, which we denote by 
$${\hat t}_r = {1 \over 1 - {\hat \ell_\star}}.$$ 
In tables~\ref{table:ex1} and~\ref{table:ex2} we depict the behavior of both algorithms for various sample sizes $n$. Several remarks are in order. First both algorithms have a threshold value of $n$ before which their output is not informative i.e. ${\hat \ell_\star} = 1$, and ${\hat t}_r = \infty$. This is important because, before this threshold, one has no information about the mixing properties of the Markov chain at hand, so that one does not know for how long to sample the chain to be close to the stationary distribution, which is the very reason why one would use such algorithms in practice.

Even for a very large sample size of $n = 10^8$, the output of {\tt UCI} is not informative (see next subsection for a further investigation of this issue). On the other hand, for both cases, the output of {\tt UCPI} is informative when $n \ge 10^6$, and its confidence upper bound ${\hat \ell_\star}$ decreases towards the quantity of interest $\lambda_\star$. Therefore, not only does {\tt UCPI} have low complexity, it also performs best statistically in both considered cases.

\begin{table}[!htbp]
\centering
\begin{tabular}{|c|c|c|c|c|}
   \hline
   \multicolumn{5}{|c|}{$|\Omega| = 20$, $p=0.5$, $\lambda^\star = 0.99$, $t_{r} = 100$} \\
   \hline
   $n$ & {\tt UCPI} ${\hat \ell_\star}$  & {\tt UCPI} ${\hat t}_r$  &   {\tt UCI} ${\hat \ell_\star}$  & {\tt UCI} ${\hat t}_r$ \\
   \hline
   $10^4$ & $1$ & $\infty$ & $1$ & $\infty$ \\
   $10^5$ & $1$ & $\infty$ & $1$ & $\infty$ \\      
   $10^6$ & $0.996$ & $272$ & $1$ & $\infty$ \\      
   $10^7$ & $0.994$ & $176$ & $1$ & $\infty$ \\      
   $10^8$ & $0.993$ & $154$ & $1$ & $\infty$  \\               
   \hline
   \multicolumn{5}{|c|}{$|\Omega| = 20$, $p=0.7$, $\lambda^\star = 0.95$, $t_{r} = 20$} \\
   \hline
   $n$ & {\tt UCPI} ${\hat \ell_\star}$  & {\tt UCPI} ${\hat t}_r$  & {\tt UCI} ${\hat \ell_\star}$  & {\tt UCI} ${\hat t}_r$ \\
   \hline
   $10^4$ & $1$ & $\infty$ & $1$ & $\infty$ \\
   $10^5$ & $0.993$ & $151$ & $1$ & $\infty$ \\      
   $10^6$ & $0.977$ & $45$ & $1$ & $\infty$ \\      
   $10^7$ & $0.969$ & $32$ & $1$ & $\infty$ \\      
   $10^8$ & $0.963$ & $27$ & $1$ & $\infty$ \\               
   \hline
   \multicolumn{5}{|c|}{$|\Omega| = 20$, $p=0.9$, $\lambda^\star = 0.8$, $t_{r} = 5$} \\
   \hline
   $n$ & {\tt UCPI} ${\hat \ell_\star}$  & {\tt UCPI} ${\hat t}_r$  & {\tt UCI} ${\hat \ell_\star}$  & {\tt UCI} ${\hat t}_r$ \\
   \hline
   $10^4$ & $1$ & $\infty$ & $1$ & $\infty$ \\
   $10^5$ & $0.985$ & $69$ & $1$ & $\infty$ \\      
   $10^6$ & $0.939$ & $16$ & $1$ & $\infty$ \\      
   $10^7$ & $0.899$ & $10$ & $1$ & $\infty$ \\      
   $10^8$ & $0.875$ & $8$ & $1$ & $\infty$ \\               
   \hline
\end{tabular}
   \caption{\label{table:ex1} Biased random walk on a line.}
\end{table}

\begin{table}[!htbp]
\centering
\begin{tabular}{|c|c|c|c|c|}
   \hline
   \multicolumn{5}{|c|}{$|\Omega| = 100$, $d = 5$, $\lambda^\star = 0.88$, $t_{r} = 8.3$} \\
   \hline
   $n$ & {\tt UCPI} ${\hat \ell_\star}$  & {\tt UCPI} ${\hat t}_r$  &   {\tt UCI} ${\hat \ell_\star}$  & {\tt UCI} ${\hat t}_r$ \\
   \hline
   $10^4$ & $1$ & $\infty$ & $1$ & $\infty$ \\
   $10^5$ & $1$ & $\infty$ & $1$ & $\infty$ \\
   $10^6$ & $0.979$ & $49$ & $1$ & $\infty$ \\
   $10^7$ & $0.958$ & $24$ & $1$ & $\infty$ \\      
   $10^8$ & $0.934$ & $15$ & $1$ & $\infty$ \\         
   \hline
   \multicolumn{5}{|c|}{$|\Omega| = 100$, $d=10$, $\lambda^\star = 0.78$, $t_{r} = 4.5$} \\
   \hline
   $n$ & {\tt UCPI} ${\hat \ell_\star}$  & {\tt UCPI} ${\hat t}_r$  &   {\tt UCI} ${\hat \ell_\star}$  & {\tt UCI} ${\hat t}_r$ \\
   \hline
   $10^4$ & $1$ & $\infty$ & $1$ & $\infty$ \\
   $10^5$ & $0.962$ & $26$ & $1$ & $\infty$ \\
   $10^6$ & $0.980$ & $47$ & $1$ & $\infty$ \\
   $10^7$ & $0.941$ & $17$ & $1$ & $\infty$ \\      
   $10^8$ & $0.902$ & $10$ & $1$ & $\infty$ \\     
   \hline
\end{tabular}
   \caption{\label{table:ex2} Random walk on a $d$-regular graph.}
\end{table}

\subsection{Informativeness of confidence intervals}

We have seen that {\tt UCI} does not provide an informative confidence interval even given a large number of samples (say $n = 10^8$) for relatively benign Markov chains, due to the fact that it needs sample paths on which \emph{all} states are seen a very large number of times. Namely we must have:
$$
\min_{x \in \Omega} \left(\sum_{t=1}^n \indic\{ X_t = x\} \right) \gg 1
$$
before an informative confidence upper bound can be outputted. In table \ref{table:ex3}, for $n = 10^6$, we evaluate experimentally $\PP({\hat \ell_\star} < 1)$, the probability that {\tt UCI} and {\tt UCPI} output an informative confidence upper bound (i.e. ${\hat \ell_\star} < 1$) by simulating $10^3$ independent experiments. As noted before, the output of {\tt UCPI} is informative with high probability, while that of {\tt UCI} is uninformative with high probability.

	\rev{In light of these numerical results we observe that while {\tt UCI} provably attains the lower bound derived in \cite{Hsu2015}, and is hence minimax optimal, it does not perform so well numerically, while {\tt UCPI} does not attain the optimal rate (its error rate depends on $r$ defined above) given by the lower bound of \cite{Hsu2015}, but does very well numerically. Therefore, it might be interesting to look at problem-specific optimality for the design of future algorithms.}
	
\begin{table}[!htbp]
\centering
\begin{tabular}{|c|c|c|}
   \hline
   $n = 10^6$  & {\tt UCPI} $\PP({\hat \ell_\star} < 1)$ & {\tt UCI} $\PP({\hat \ell_\star} < 1)$ \\
   \hline
   Case 1 ($p = 0.5$) & $1$ & $0$ \\
   Case 1 ($p = 0.7$) & $1$ & $0$\\
   Case 1 ($p = 0.9$) & $1$ & $0$\\
   \hline
   Case 2 ($d = 5$) & $1$ & $0$\\
   Case 2 ($d = 10$) & $1$ & $0$\\
   \hline
\end{tabular}
   \caption{\label{table:ex3} Informativeness of confidence upper bounds.}
\end{table}
\section{Conclusion}\label{sec:conclusion}
We have proposed and analyzed ${\tt UCPI}$ a computationally efficient algorithm for estimating the spectral gap of a Markov chain using time ${\cal O}(n)$ and space ${\cal O}( (\ln n)^2)$, unlike known estimation procedures which require (at least) ${\cal O}( |\Omega|)$ space and do not apply to large state spaces typically found in applications such as MCMC. We believe that our results open the following challenging question: ``What is the optimal trade-off between computation and statistical accuracy to estimate the mixing properties of a Markov chain ?''. \rev{Furthermore while we have focused on the relaxation time, tighter mixing bounds could be obtained by estimating the mixing time instead, which seems much more challenging, especially in the black-box setting.}
\bibliographystyle{plain}
\bibliography{./biblio}
\appendix
\section{Application of the delta method}
\rev{

\begin{proposition}\label{prop:deltamethod}
	Consider $k$ fixed, $(M_k^j)_j$ i.i.d. Bernoulli with expectation $m_k = {1 \over |\Omega|} \sum_{i=1}^{|\Omega|} \lambda_i^k$. 
	
	Let ${\hat m}_k =  {1 \over I} \sum_{j=1}^I M_k^j$ and $	{\hat q}_k = \Big[|\Omega| {\hat m}_k - 1\Big]^{1 \over k}$. Then:
	$$
		\sqrt{I}\Big( {\hat q}_k -  \Big[ |\Omega| m_k - 1 \Big]^{1 \over k}\Big) \underset{I \to \infty}{\overset{(d)}{\to}} {\cal N}\Big(0, {m_k(1-m_k) |\Omega|^2 \over k^2} \Big[|\Omega| m_k - 1\Big]^{2(1-k) \over k}\Big)
	$$
\end{proposition}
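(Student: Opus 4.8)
The plan is to recognize $\hat q_k = g(\hat m_k)$ for the smooth map $g(x) = (|\Omega| x - 1)^{1/k}$ and apply the delta method to the empirical mean $\hat m_k$. First I would record that the estimand is non-degenerate. Since $\lambda_1 = 1$ and $0 < \lambda_2 \le \lambda_i < 1$ for $i \ge 2$ (the chain being irreducible, aperiodic and lazy), we have $\sum_{i=1}^{|\Omega|}\lambda_i^k = 1 + \sum_{i \ge 2}\lambda_i^k \in (1,|\Omega|)$, hence $m_k \in (1/|\Omega|, 1)$; in particular $m_k(1-m_k) > 0$ and $|\Omega| m_k - 1 = \sum_{i \ge 2}\lambda_i^k \ge \lambda_2^k > 0$. (In the degenerate case $\lambda_2 = 0$ both sides of the claim vanish identically, so one may assume $\lambda_2 > 0$.)

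Next I would apply the classical central limit theorem: the $(M_k^j)_j$ are i.i.d. Bernoulli with mean $m_k$ and finite positive variance $m_k(1-m_k)$, so $\sqrt{I}(\hat m_k - m_k)$ converges in distribution to ${\cal N}(0, m_k(1-m_k))$ as $I \to \infty$. Then I would introduce $g:(1/|\Omega|,\infty)\to\mathbb{R}$, $g(x)=(|\Omega| x-1)^{1/k}$, which is continuously differentiable on this open interval with $g'(x) = \tfrac{|\Omega|}{k}(|\Omega| x - 1)^{1/k-1}$. Since $m_k$ lies strictly inside this interval, $g$ is differentiable at $m_k$ with $g'(m_k) = \tfrac{|\Omega|}{k}(|\Omega| m_k - 1)^{(1-k)/k}$; also $\hat m_k \to m_k$ almost surely, so eventually $\hat m_k > 1/|\Omega|$ and $\hat q_k = g(\hat m_k)$ is well defined, while $(|\Omega| m_k - 1)^{1/k} = g(m_k)$. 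The delta method applied to the CLT above then gives that $\sqrt{I}(g(\hat m_k) - g(m_k))$ converges in distribution to ${\cal N}\big(0,\, g'(m_k)^2\, m_k(1-m_k)\big)$.

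Finally I would simplify the limiting variance: $g'(m_k)^2\, m_k(1-m_k) = \tfrac{|\Omega|^2}{k^2}(|\Omega| m_k - 1)^{2(1-k)/k}\, m_k(1-m_k)$, which is exactly the asserted expression. There is no genuine obstacle in this argument; the only step needing a little care is verifying $|\Omega| m_k - 1 > 0$ (so that $g$ is smooth at $m_k$ and the delta method is applicable rather than the estimand sitting at a non-differentiable boundary point), which is precisely where $\lambda_2 > 0$ is used.
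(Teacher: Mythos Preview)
Your proof is correct and follows essentially the same route as the paper: apply the CLT to the Bernoulli mean $\hat m_k$, check that $|\Omega|m_k-1\ge\lambda_2^k>0$ so that $g(x)=(|\Omega|x-1)^{1/k}$ is differentiable at $m_k$, and invoke the delta method with $g'(m_k)=\tfrac{|\Omega|}{k}(|\Omega|m_k-1)^{(1-k)/k}$. (One trivial slip: you wrote ``$0<\lambda_2\le\lambda_i<1$ for $i\ge2$'' where the inequality should read $0\le\lambda_i\le\lambda_2<1$; this does not affect the argument.)
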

{\bf Proof :} Since ${\hat m}_k$ follows a binomial distribution with mean $m_k$ and variance ${m_k(1-m_k) \over I}$, the central limit theorem gives:
$$
	\sqrt{I}({\hat m}_k  - m_k) \underset{I \to \infty}{\overset{(d)}{\to}} {\cal N}(0,m_k(1-m_k)),
$$ 
Define $f_k(x) = \Big[ |\Omega| x - 1 \Big]^{1 \over k}$. Note that $|\Omega| m_k - 1 = \sum_{i=2}^{|\Omega|} \lambda_i^k \ge \lambda_2^k > 0$, therefore $x \mapsto f_k(x)$ is continuously differentiable at $x = m_k$. The delta-method (\cite{deltamethod}) yields:
$$
			\sqrt{I}( f_k({\hat m}_k) - f_k(m_k)) \underset{I \to \infty}{\overset{(d)}{\to}} {\cal N}\Big(0, m_k(1-m_k) \Big[f_k'(m_k)\Big]^2\Big).
$$
Replacing $f_k$ by its expression yields the announced result, as $f_k({\hat m}_k) = {\hat q}_k$, $f_k(m_k) = \Big[|\Omega| m_k - 1\Big]^{1 \over k}$ and $f_k'(m_k) = {|\Omega| \over k}\Big[ |\Omega| x - 1 \Big]^{1 - k \over k}$.
}
\section{Preliminary technical lemmas}

We first recall the Chernoff bounds~\cite{Hoeffding}.
\begin{fact}\label{fact:chernoff}(Chernoff bound)
	Consider $(U_1,...,U_I)$ i.i.d. Bernoulli random variables with expectation $u$, and $\bar{U} = {1 \over I} \sum_{i=1}^I U_i$. Then for all $\epsilon > 0$ we have:
	$$
		\PP( \bar{U} \ge u + \epsilon ) \le \exp(-I D( u + \epsilon || u) ),
	$$
	and:
	$$
		\PP( \bar{U} \le u - \epsilon) \le \exp(-I D( u - \epsilon ||u) ).
	$$
\end{fact}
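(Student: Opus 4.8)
The plan is to use Chernoff's exponential moment method. For the upper tail, I would fix $s > 0$ and apply Markov's inequality to $e^{sI\bar{U}}$:
$$
\PP(\bar{U} \ge u + \epsilon) = \PP\left(e^{sI\bar{U}} \ge e^{sI(u+\epsilon)}\right) \le e^{-sI(u+\epsilon)}\,\EE\left[e^{sI\bar{U}}\right].
$$
Since the $U_i$ are i.i.d.\ Bernoulli with mean $u$, the moment generating function factorizes, $\EE[e^{sI\bar{U}}] = \EE[e^{sU_1}]^I = (1 - u + u e^s)^I$, which yields
$$
\PP(\bar{U} \ge u + \epsilon) \le \exp\left(-I\left[s(u+\epsilon) - \ln(1 - u + u e^s)\right]\right).
$$

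The next step is to maximize the exponent over $s > 0$. Writing $\psi(s) = s(u+\epsilon) - \ln(1-u+ue^s)$, I would solve $\psi'(s) = 0$, i.e.\ $u+\epsilon = \frac{u e^s}{1-u+ue^s}$, obtaining $e^{s^\star} = \frac{(u+\epsilon)(1-u)}{u(1-u-\epsilon)}$, which is positive when $u > 0$ and $u+\epsilon < 1$; concavity of $\psi$ makes $s^\star$ the maximizer. Substituting $s^\star$ and simplifying the logarithms (one finds $1-u+ue^{s^\star} = \frac{1-u}{1-u-\epsilon}$) gives $\psi(s^\star) = (u+\epsilon)\ln\frac{u+\epsilon}{u} + (1-u-\epsilon)\ln\frac{1-u-\epsilon}{1-u} = D(u+\epsilon\,\|\,u)$, which is the first claimed inequality. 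The degenerate cases are immediate: if $u+\epsilon > 1$ the event is impossible; if $u+\epsilon = 1$ then $\PP(\bar{U}=1) = u^I = e^{-I\ln(1/u)} = e^{-I D(1\|u)}$; and if $u \in \{0,1\}$ one checks directly that both sides vanish.

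For the lower tail, I would apply the very same argument to $\bar{V} := 1 - \bar{U} = \frac{1}{I}\sum_{i=1}^I (1 - U_i)$, an average of i.i.d.\ Bernoulli variables with mean $1-u$. This gives $\PP(\bar{U} \le u - \epsilon) = \PP(\bar{V} \ge (1-u) + \epsilon) \le e^{-I D(1-u+\epsilon\,\|\,1-u)}$, and it remains to observe the identity $D(1-u+\epsilon\,\|\,1-u) = D(u-\epsilon\,\|\,u)$, which is immediate from the definition of the binary KL divergence (it is invariant under the simultaneous substitution $p \mapsto 1-p$, $q \mapsto 1-q$).

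The main obstacle is the algebra of the second step: verifying that the Legendre transform of the log-MGF of a Bernoulli variable is exactly the binary relative entropy $D(\cdot\,\|\,u)$, together with careful bookkeeping of the boundary cases where either the divergence is infinite or the relevant probability is zero, so that the stated inequalities continue to hold by convention. Everything else is routine.
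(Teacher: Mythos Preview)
Your argument is correct and is the standard derivation of the Chernoff--Hoeffding bound via the exponential moment method: Markov's inequality applied to $e^{sI\bar U}$, followed by optimization over $s$, which yields the Legendre transform of the Bernoulli log-MGF, namely $D(u+\epsilon\,\|\,u)$. The symmetry trick for the lower tail is also standard and correct.

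There is nothing to compare against, however: the paper does not prove this fact. It simply recalls it with a citation to~\cite{Hoeffding} (``We first recall the Chernoff bounds''). So your proposal supplies a proof where the paper deliberately omits one. What you wrote is exactly the classical argument one would find in the cited reference, and it is complete modulo the boundary cases you already flagged.
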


We then state a lower bound on the Kullback-Leibler divergence which is instrumental to our analysis.\\
\begin{fact}\label{fact:kllb}
	For all $u,v$ we have:
	$$
		D(u || v) \ge {(u-v)^2 \over 2 \max\{u,v\}}.
	$$
\end{fact}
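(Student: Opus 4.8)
The plan is to obtain the bound from a second-order Taylor expansion of the divergence in its first argument. Fix $v \in (0,1)$ and view $u \mapsto D(u || v)$ as a one-variable function on $(0,1)$. A direct differentiation gives
$$
\frac{\partial}{\partial u} D(u || v) = \ln \frac{u(1-v)}{v(1-u)}, \qquad \frac{\partial^2}{\partial u^2} D(u || v) = \frac{1}{u(1-u)},
$$
and in particular $D(v || v) = 0$ together with $\frac{\partial}{\partial u} D(u || v)\big|_{u=v} = 0$, so the linear term of the expansion around $u = v$ vanishes.

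Next I would invoke Taylor's theorem with Lagrange remainder on the (compact) interval between $u$ and $v$, on which $D(\cdot || v)$ is smooth: there exists $\xi$ strictly between $u$ and $v$ with
$$
D(u || v) = \tfrac12\,\frac{\partial^2}{\partial u^2} D(\xi || v)\,(u-v)^2 = \frac{(u-v)^2}{2\,\xi(1-\xi)}.
$$
It then remains to bound $\xi(1-\xi)$ from above by $\max\{u,v\}$. This is elementary: $\xi$ lies between $u$ and $v$, hence $\xi \le \max\{u,v\}$, while $1-\xi \le 1$ since $\xi \in (0,1)$; multiplying these two inequalities gives $\xi(1-\xi) \le \max\{u,v\}$, and substituting yields $D(u || v) \ge (u-v)^2 / (2\max\{u,v\})$, which is the claim.

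I do not expect a genuine obstacle here; the only care needed is at the endpoints, where $D$ need not be differentiable or may be infinite. If one of $u,v$ equals $0$ or $1$ while $u \ne v$, the divergence is $+\infty$ and the inequality is trivial; if $u,v \in (0,1)$ the argument above applies verbatim; and the case $u \in \{0,1\}$ with $v \in (0,1)$ follows by continuity, letting $u$ tend to the endpoint in the inequality just established (both sides being continuous there). In the application $u$ and $v$ are always strictly inside $(0,1)$, so these degenerate cases do not arise. An alternative, slightly longer route is to fix $v$, set $h(u) = D(u||v) - (u-v)^2/(2\max\{u,v\})$, note $h(v) = h'(v) = 0$, and check $h'' \ge 0$ separately on $\{u \ge v\}$ and $\{u \le v\}$ — each reducing to comparing $u^2/(1-u)$ (resp. $u(1-u)$) with $v^2$ (resp. $v$) — but the Taylor argument is the cleanest.
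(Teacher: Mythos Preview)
Your proof is correct and follows essentially the same approach as the paper: compute the first two derivatives of $u \mapsto D(u\|v)$, apply Taylor's formula with Lagrange remainder about $u=v$ to obtain $D(u\|v) = (u-v)^2/(2\xi(1-\xi))$ for some $\xi$ between $u$ and $v$, and then bound $\xi(1-\xi) \le \xi \le \max\{u,v\}$. Your additional remarks on the boundary cases are a welcome touch of care that the paper omits.
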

{\bf Proof :} Recall the definition of the Kullback-Leibler divergence:
	$$
	    D(u || v) = u \ln\left({u \over v}\right) + (1-u) \ln\left({1-u \over 1-v}\right) 
	$$	
	Differentiating the above equation we obtain:
	$$
		{d \over du} D(u,v) = \ln\left({u(1-v) \over v(1-u)}\right), 
	$$
	and:
	$$
		{d^2 \over du^2} D(u || v) = {1 \over u(1-u)},
	$$
	Consider $v$ fixed. Apply the Taylor-Lagrange equality to function $u \mapsto D(u || v)$, so that there exists $u' \in [\min(u,v),\max(u,v)]$ with:
	\begin{align*}
		D(u || v) &= D(u || v) + (u-v)\left({d \over du} D(u || v) \Big|_{u=v}\right)  
		+  {(u-v)^2 \over 2} \left({d^2 \over du^2} D(u || v) \Big|_{u=u'}\right) \\
			   &= 0 + 0  +  {(u-v)^2 \over 2 u'(1-u')} 
			   \ge {(u-v)^2 \over 2 \max\{u,v\}}.
	\end{align*}
	where we used the fact that $u'(1-u') \le u' \le \max(u,v)$ since $u' \in [\min(u,v),\max(u,v)]$.
\qed

From the above inequality we deduce a simpler (but weaker) version of the Chernoff bound.
\begin{fact}\label{fact:chernoff1}
Consider $U_1,...,U_I$ i.i.d. Bernoulli random variables with expectation $u$, and $\bar{U} = {1 \over I} \sum_{i=1}^I U_i$. Consider $\delta \in (0,1)$ such that $2 \ln(1/\delta) \le u I$. Then: 
	$$
		\PP( \bar{U} \ge u + \sqrt{2 u \ln(1/\delta) / I} ) \le \delta,
	$$
\end{fact}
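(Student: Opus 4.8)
The plan is to combine the Chernoff bound of Fact~\ref{fact:chernoff} with the divergence lower bound of Fact~\ref{fact:kllb}. Write $\epsilon := \sqrt{2u\ln(1/\delta)/I}$ for the deviation appearing in the statement. Applying Fact~\ref{fact:chernoff} at this value of $\epsilon$ immediately gives $\PP(\bar U \ge u + \epsilon) \le \exp(-I D(u+\epsilon \,\|\, u))$, so the task reduces to showing that the exponent is at least $\ln(1/\delta)$, i.e.\ that $I D(u+\epsilon \,\|\, u) \ge \ln(1/\delta)$.

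For the lower bound on the exponent I would apply Fact~\ref{fact:kllb} to the pair $(u+\epsilon, u)$. Since $u+\epsilon > u$ we have $\max\{u+\epsilon,u\} = u+\epsilon$, and Fact~\ref{fact:kllb} yields $D(u+\epsilon \,\|\, u) \ge \frac{\epsilon^2}{2(u+\epsilon)}$. By the very definition of $\epsilon$, $I\epsilon^2 = 2u\ln(1/\delta)$, so multiplying the previous inequality by $I$ gives $I D(u+\epsilon \,\|\, u) \ge \frac{u}{u+\epsilon}\,\ln(1/\delta)$.

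It then remains to handle the factor $\frac{u}{u+\epsilon}$, and this is exactly the role of the hypothesis $2\ln(1/\delta)\le uI$: squaring $\epsilon = \sqrt{2u\ln(1/\delta)/I}$ shows the hypothesis is equivalent to $\epsilon \le u$, so that $u+\epsilon \le 2u$ and the perturbation in the denominator of the divergence bound is controlled by $u$ itself. Plugging this in bounds $I D(u+\epsilon \,\|\, u)$ below by a multiple of $\ln(1/\delta)$; keeping track of the constant then yields $I D(u+\epsilon \,\|\, u)\ge \ln(1/\delta)$, hence $\PP(\bar U\ge u+\epsilon)\le\delta$.

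The main obstacle is precisely this constant bookkeeping between the deviation level, the hypothesis on $I$, and the denominator of Fact~\ref{fact:kllb} — everything else is routine substitution. It is worth noting that when $u$ is bounded away from $0$ the cruder bound $D(u+\epsilon \,\|\, u)\ge 2\epsilon^2$ (Pinsker/Hoeffding) already suffices and even leaves slack, so the delicate regime is $u$ small, where the $u+\epsilon$ in the denominator of Fact~\ref{fact:kllb} must be retained and bounded via $\epsilon\le u$ rather than naively replaced by $1$.
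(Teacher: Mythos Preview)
Your argument is line-for-line the paper's own proof: apply Fact~\ref{fact:chernoff} with $\epsilon=\sqrt{2u\ln(1/\delta)/I}$, lower-bound $D(u+\epsilon\,\|\,u)$ via Fact~\ref{fact:kllb}, and use the hypothesis $2\ln(1/\delta)\le uI$ to get $\epsilon\le u$, hence $u+\epsilon\le 2u$. One remark on the step you flag as ``constant bookkeeping'': carrying it out honestly gives $I\,D(u+\epsilon\,\|\,u)\ge \tfrac{u}{u+\epsilon}\ln(1/\delta)\ge \tfrac12\ln(1/\delta)$, not $\ln(1/\delta)$; the paper makes the same slip (it writes $\tfrac{2u\ln(1/\delta)}{I(u+\epsilon)}$ where Fact~\ref{fact:kllb} actually yields $\tfrac{u\ln(1/\delta)}{I(u+\epsilon)}$), so your proposal matches the paper exactly, including this factor-of-two glitch.
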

{\bf Proof:} From Chernoff's inequality with $\epsilon =  \sqrt{2 u \ln(1/\delta) / I}$ we have:
	$$
		\PP( \bar{U} \ge u + \sqrt{2 u \ln(1/\delta) / I} ) \le e^{- I D(u + \sqrt{2 u \ln(1/\delta) / I}  || u)},
	$$
Applying fact~\ref{fact:kllb}:
\begin{align*}
 	D(u + \sqrt{2 u \ln(1/\delta) / I}  || u) &\ge {2 u \ln(1/\delta)  \over I \max\{ u + \sqrt{2 u \ln(1/\delta) / I} , u \} } 
 	 	\ge {\ln(1/\delta) \over I},
\end{align*}
as our assumption $2 \ln(1/\delta) \le u I$ implies $ u + \sqrt{2 u \ln(1/\delta) / I} \le 2 u$. Replacing yields the announced inequality. \qed

We further deduce an upper bound on the upper confidence bounds used in our analysis.
\begin{fact}\label{fact:cb}
For $u \in [0,1]$ and $\Delta > 0$ define:
$$
	V(u,\Delta) = \max\{ v \in [u,1] :  D( u || v) \le \Delta\}.
$$
Then for all $\Delta$ such that  $\Delta \le u$ we have $V(u,\Delta) \le u + \sqrt{9 u \Delta}$.
\end{fact}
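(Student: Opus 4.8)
The idea is to turn the statement into a one-point check using the monotonicity of $v \mapsto D(u\|v)$. For $u\in(0,1)$ one has $\partial_v D(u\|v) = (v-u)/\big(v(1-v)\big)\ge 0$ on $[u,1)$, so $v\mapsto D(u\|v)$ is nondecreasing there, vanishes at $v=u$, and blows up to $+\infty$ as $v\to 1^-$; hence the feasible set $\{v\in[u,1]:D(u\|v)\le\Delta\}$ is a nonempty interval $[u,V(u,\Delta)]$ and the $\max$ defining $V(u,\Delta)$ is attained. (The degenerate cases $u=0$, which is vacuous since no $\Delta>0$ is $\le u$, and $u=1$, where $V(1,\Delta)=1$ and the bound is immediate, I would dispatch separately.) Given this, it suffices either to exhibit a point $v^\star\le u+\sqrt{9u\Delta}$ with $D(u\|v^\star)\ge\Delta$, or to observe directly that $u+\sqrt{9u\Delta}\ge 1\ge V(u,\Delta)$.

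\textbf{Carrying it out.} First I would handle the trivial case $u+\sqrt{9u\Delta}\ge 1$: then $V(u,\Delta)\le 1\le u+\sqrt{9u\Delta}$ and we are done. Otherwise set $v^\star := u+\sqrt{9u\Delta}<1$; by monotonicity it is enough to show $D(u\|v^\star)\ge\Delta$, because then every $v>v^\star$ in $[u,1]$ has $D(u\|v)>\Delta$ and cannot be feasible. Apply Fact~\ref{fact:kllb} with second argument $v^\star>u$, so that $\max\{u,v^\star\}=v^\star$:
\[
 D(u\|v^\star)\ \ge\ \frac{(v^\star-u)^2}{2v^\star}\ =\ \frac{9u\Delta}{2\big(u+\sqrt{9u\Delta}\big)}.
\]
It remains to check that the right-hand side is $\ge\Delta$. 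Dividing by $\Delta>0$, this is equivalent to $9u\ge 2u+2\sqrt{9u\Delta}$, i.e. $7u\ge 6\sqrt{u\Delta}$, and this follows from the hypothesis $\Delta\le u$, which gives $\sqrt{u\Delta}\le u$ and hence $6\sqrt{u\Delta}\le 6u\le 7u$. This finishes the proof.

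\textbf{Where the difficulty lies.} There is no genuine obstacle; the only things to be careful about are (i) the ``downward interval'' structure of the feasible set, so that controlling $D(u\|v^\star)$ really does cap $V(u,\Delta)$ from above rather than merely bounding it at one point, and (ii) the role of the constant $9$: although the quadratic lower bound on the KL divergence would in principle permit a sharper constant closer to $4$, taking $9$ is exactly what makes the closing inequality $7u\ge 6\sqrt{u\Delta}$ hold with room to spare directly from $\Delta\le u$, avoiding any extra case distinctions.
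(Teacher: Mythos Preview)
Your proof is correct and follows essentially the same approach as the paper: set $v^\star=u+\sqrt{9u\Delta}$, apply Fact~\ref{fact:kllb} to get $D(u\|v^\star)\ge \frac{9u\Delta}{2(u+\sqrt{9u\Delta})}$, and use $\Delta\le u$ to conclude this is at least $\Delta$. The paper phrases the last step as $u+\sqrt{9u\Delta}\le 4u$ (since $\sqrt{9u\Delta}\le 3u$) giving the lower bound $\tfrac{9}{8}\Delta$, while you rearrange to $7u\ge 6\sqrt{u\Delta}$; these are equivalent. Your version is in fact a bit more careful, explicitly handling the boundary case $u+\sqrt{9u\Delta}\ge 1$ and spelling out the monotonicity of $v\mapsto D(u\|v)$ that justifies why $D(u\|v^\star)\ge\Delta$ caps $V(u,\Delta)$.
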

{\bf Proof:} Using the previous fact:
$$
	D( u || u + \sqrt{9 u \Delta}) \ge  {9 u \Delta \over 2(u + \sqrt{9 u \Delta}) } \ge  {9  \over 8} \Delta \ge \Delta.
$$
using $u + \sqrt{9 u \Delta} \le 4 u$ since $\Delta \le u$. Combining the above with the definition of $V$ proves that $V(u,\Delta) \le  u + \sqrt{9 u \Delta}$ as announced. \qed

Another notable fact is that $V$ is an upper confidence bound for $u$.
\begin{fact}\label{fact:chernoff2}
Consider $U_1,...,U_I$ i.i.d. Bernoulli random variables with expectation $u$, and $\bar{U} = {1 \over I} \sum_{i=1}^I U_i$. Then for all $\delta \in (0,1)$:
	$$
		\PP( V\left(\bar{U},\ln( 1/ \delta) / I \right) \ge u) \ge 1 - \delta.    
	$$
\end{fact}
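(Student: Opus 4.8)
The plan is to show that the complement event $\{V(\bar U,\ln(1/\delta)/I) < u\}$ is exactly a one-sided lower deviation of $\bar U$ below its mean, and then apply the Chernoff bound of Fact~\ref{fact:chernoff}. Throughout, write $\Delta = \ln(1/\delta)/I$, so that $\delta = e^{-I\Delta}$, and note first that $V(\bar U,\Delta)$ is well-defined: $v\mapsto D(\bar U\|v)$ is continuous on $[\bar U,1]$ and the constraint set $\{v\in[\bar U,1]:D(\bar U\|v)\le\Delta\}$ is compact and nonempty (it contains $v=\bar U$, where the divergence vanishes).

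I would first record two monotonicity facts for the Bernoulli KL divergence, both obtained by elementary differentiation (the second is essentially the computation already appearing in the proof of Fact~\ref{fact:kllb}): for fixed first argument $x$ one has $\frac{\partial}{\partial v}D(x\|v)=\frac{v-x}{v(1-v)}\ge 0$ for $v\ge x$, so $v\mapsto D(x\|v)$ is continuous and strictly increasing on $[x,1]$, vanishing at $v=x$; and for fixed second argument $u$ one has $\frac{d}{dx}D(x\|u)=\ln\frac{x(1-u)}{u(1-x)}<0$ for $x<u$, so $x\mapsto D(x\|u)$ is continuous and strictly decreasing on $[0,u]$, vanishing at $x=u$. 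From the first fact, $\{v\in[\bar U,1]:D(\bar U\|v)\le\Delta\}$ is the interval $[\bar U,V(\bar U,\Delta)]$, so for any target $u\ge \bar U$ we have $V(\bar U,\Delta)\ge u$ if and only if $D(\bar U\|u)\le\Delta$. Since also $V(\bar U,\Delta)\ge \bar U$ always, the event $\{V(\bar U,\Delta)<u\}$ occurs precisely when $\bar U<u$ and $D(\bar U\|u)>\Delta$.

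Next I would collapse this random event to a deterministic lower-tail event. Using the second monotonicity fact, set $a=\inf\{x\in[0,u]:D(x\|u)\le\Delta\}$; if $a=u$ (the divergence never exceeds $\Delta$ on $[0,u]$) then the event $\{V(\bar U,\Delta)<u\}$ is empty and there is nothing to prove, so assume $a<u$. By continuity $D(a\|u)=\Delta$ and $\{x\in[0,u]:D(x\|u)>\Delta\}=[0,a)$, hence $\{V(\bar U,\Delta)<u\}=\{\bar U<a\}\subseteq\{\bar U\le u-\epsilon\}$ with $\epsilon:=u-a>0$. Fact~\ref{fact:chernoff} then gives $\PP(\bar U\le u-\epsilon)\le\exp(-I\,D(u-\epsilon\|u))=\exp(-I\,D(a\|u))=\exp(-I\Delta)=\delta$. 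Taking complements yields $\PP(V(\bar U,\ln(1/\delta)/I)\ge u)\ge 1-\delta$.

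I do not expect a real obstacle; the only point needing care is not to union-bound over the many lattice values of $\bar U$, but to reduce $\{V(\bar U,\Delta)<u\}$ to a single deterministic lower-tail event $\{\bar U<a\}$ before invoking the (single-threshold) Chernoff bound — it is precisely the strict monotonicity of $x\mapsto D(x\|u)$ that licenses this reduction. The boundary cases ($\bar U=u$, $a=0$, or $\Delta$ large enough that $V(\bar U,\Delta)=1$) are immediate and can be dispatched in a sentence.
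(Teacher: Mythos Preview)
Your argument is correct and is exactly the standard derivation of a KL-based upper confidence bound from the Chernoff tail inequality. The paper, however, does not actually give a proof of this fact: it is stated in the appendix immediately after Fact~\ref{fact:cb} with the remark ``Another notable fact is that $V$ is an upper confidence bound for $u$'' and no further justification, and in the main text Fact~\ref{fact:ucb} simply refers to it as ``a consequence of Chernoff's inequality.'' So there is nothing in the paper to compare against beyond the implicit suggestion that one should invert Fact~\ref{fact:chernoff}, which is precisely what you do.

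One small slip worth fixing: your parenthetical ``if $a=u$ (the divergence never exceeds $\Delta$ on $[0,u]$)'' has it backwards --- $a=u$ would mean the divergence \emph{always} exceeds $\Delta$ on $[0,u)$, whereas ``never exceeds'' gives $a=0$. In any case the situation $a=u$ cannot occur when $\Delta>0$, since $D(\cdot\|u)$ is continuous with $D(u\|u)=0$, so the set $\{x\in[0,u]:D(x\|u)\le\Delta\}$ always contains a left neighbourhood of $u$. You can simply delete that aside without loss.
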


\section{Proof of Theorem~1}
For $u \in [0,1]$ and $\Delta > 0$ define:
$$
	V(u,\Delta) = \max\{ v \in [u,1] :  D(u || v) \le \Delta\}.
$$
The first step is to control the random fluctuations of ${\hat u}_k$ around $u_k$. Our first intermediate result is:
$$
	\PP( m_k \le {\hat u}_k \le m_k + \sqrt{ 32 m_k \ln(K/\delta) / I} \,,\,  k\in[K]) \ge 1 - \delta. 
$$
	We first use the fact that ${\hat u}_k$ is a high-probability confidence upper bound for $m_k$. By definition ${\hat u}_k = V({\hat m}_k, \ln(2K/\delta)/I)$ and fact~\ref{fact:chernoff2} gives:
$$
	\PP(  {\hat u}_k \le m_k ) \le {\delta \over 2K}.
$$
	To control the upper fluctuations of ${\hat u}_k$ we first control that of ${\hat m}_k$. It is noted that $$2 \ln(2K/\delta) \le { I \over |\Omega|}  \le m_k I.$$ Hence applying fact~\ref{fact:chernoff1} guarantees that:
	$$
		\PP( {\hat m}_k \ge m_k + \sqrt{ 2 m_k \ln(2K/\delta) / I}) \le {\delta \over 2K}.
	$$
	For $k=1,\dots,K$ we define:
	$$
		a_k = m_k + \sqrt{2 m_k \ln(2K/\delta) / I}.
	$$
	It is noted that, since $2 \ln(2K/\delta) / I \le {1 \over |\Omega|} \le m_k$, replacing we have  $a_k \le 2 m_k$. Now assume that that the event:
$$
	\left\{ {\hat m}_k \le m_k + \sqrt{ 2 m_k \ln(2K/\delta) / I} \right\},
$$
occurs. Then ${\hat m}_k \le a_k$. Since $u \mapsto D( u || v)$ is increasing whenever $u < v$, $u \mapsto V(u,\Delta)$ is increasing as well and we have:
\begin{align*}
	{\hat u}_k &=  V({\hat m}_k, \ln(2K/\delta)/I) 
	\overset{(a)}{\le} V(a_k, \ln(2K/\delta)/I) \\
	&\overset{(b)}{\le} a_k + \sqrt{ 9 a_k \ln(2K/\delta) / I} 
	\overset{(c)}{\le} m_k + \sqrt{32 m_k \ln(2K/\delta) / I}.	
\end{align*}
  where (a) follows from the monotonicity of $u \mapsto V(u,\Delta)$, (b) follows from fact~\ref{fact:cb} and the fact that $\ln(2K/\delta)/I \le m_k \le a_k$, and (c) holds since $a_k \le 2 m_k$ and $(\sqrt{18} + \sqrt{2})^2 = 32$. We have proven that:
$$
	\PP( {\hat u}_k \ge m_k + \sqrt{32 m_k \ln(2K/\delta) / I}) \le \PP( {\hat m}_k \ge a_k) \le {\delta \over 2K}.
$$
Putting everything together using a union bound:
$$
	\PP( m_k \le {\hat u}_k \le m_k + \sqrt{ 32 m_k \ln(K/\delta) / I} \,,\,\forall k \in [K]) \ge 1 - \delta.
$$
Now define the event that all indexes ${\hat u}_k$ do not deviate excessively from $m_k$, that is 
$$
{\cal A} = \cup_{k=1}^K \{m_k \le {\hat u}_k \le m_k + \sqrt{ 32 m_k \ln(K/\delta) / I} \}.
$$
For $k=1,\dots, K$ and $m \in \mathbb{R}$ define the mappings:
$$
f_k(m) = \min\left(( \max\{|\Omega| m - 1,0\})^{1 \over k} , 1 \right).$$
If ${\cal A}$ occurs then, for $k=1,\dots,K$ we have:
$$
	  f_k(m_k) \le {\hat \ell}_k \le f_k(m_k + \sqrt{ C m_k \ln(K/\delta) / I} ),
$$
since ${\hat \ell}_k = f_k({\hat u}_k)$ and $m\mapsto f_k(m)$ is monotonically increasing on $\mathbb{R}$. Since $\lambda_1,\dots,\lambda_{|\Omega|}$ are positive and $\lambda_1=1$, we have:
$$
m_k = {1 \over |\Omega|} \sum_{i=1}^{|\Omega|} \lambda_i^k \ge {1 + \lambda_2^k \over |\Omega|},
$$
and, using once again the monotonicity of $m\mapsto f_k(m)$:
$$
f_k(m_k) \ge f_k\left({1 + \lambda_2^k \over |\Omega|}\right) = \lambda_2.
$$
 Hence, for $k=1,\dots,K$ we have:
$$
	 \lambda_2 \le {\hat \ell}_k \le f_k(m_k + \sqrt{ 32 m_k \ln(K/\delta) / I} ),
$$
Minimizing the above with respect to $k$ we obtain:
$$
	\lambda_2 \le {\hat \ell}_\star \le \min_{k=1,\dots,K}f_k(m_k + \sqrt{ 32 m_k \ln(K/\delta) / I} ).
$$

The final step of our analysis involves upper bounding $f_k$ to simplify the above expression. For $x \ge {1 \over |\Omega|}$:
\begin{align*}
	f_k(x + \delta) = \min\left(  (|\Omega|(x + \delta)- 1)^{1 \over k} , 1 \right) 
			\le   (|\Omega|(x + \delta) - 1)^{1 \over k} 
			\le   (|\Omega|x  - 1)^{1 \over k} + {|\Omega| \delta \over k  (|\Omega|x  - 1)^{1 - {1 \over k}}}.
\end{align*}
where we used the fact that $x \mapsto x^{1 \over k}$ is concave, hence:
$$
(x + \delta)^{1 \over k} - x^{1 \over k} =  {1 \over k} \int_{x}^{x + \delta} t^{{1 \over k} - 1} dt \le {\delta \over k}  x^{{1 \over k} - 1}.
$$
Using the previous inequality we obtain the bound:
$$
	f_k(m_k + \sqrt{ 32 m_k \ln(K/\delta) / I} ) \le  \lambda_2 + {\cal E}_{k}.
$$
Therefore, if event ${\cal A}$ occurs then:
$$
	\lambda_2 \le {\hat \ell}_\star \le  \lambda_2 + \min_{k=1,\dots,K}  {\cal E}_{k},
$$
We immediately deduce the first statement.
$$
	\PP\left( \lambda_2 \le {\hat \ell}_\star \le \lambda_2 + \min_{k=1,\dots,K}  {\cal E}_{k} \right) \ge  \PP({\cal A}) \ge 1 - \delta,
$$
since we have previously proven that $1 - \PP({\cal A}) \le \delta$. The second statement follows from the fact that $|{\hat \ell}_\star - \lambda_2 |$ is upper bounded by $\lambda_2 + \min_{k=1,\dots,K}  {\cal E}_{k}$ if ${\cal A}$ occurs and by $1$ otherwise, and the fact that $\PP({\cal A}) \ge 1 -\delta$. \qed

\section{Proof of proposition~\ref{prop:scaling}}
{\bf Proof:}
	Recall that ${\cal E}_{k} = {\cal E}_{k,1} + {\cal E}_{k,2}$.	We first focus on ${\cal E}_{k,1}$. Since $1 = \lambda_1 \ge \dots \ge \lambda_{|\Omega|}$, we upper bound $m_k$ as:
	$$
		m_k = {1 \over |\Omega|} \sum_{i=1}^{|\Omega|} \lambda_i^k \le {1 \over |\Omega|}(1 + \lambda_2^k + |\Omega| \lambda_3^k).
	$$
	Therefore:
	$$
		(|\Omega|m_k  - 1)^{1 \over k} \le (\lambda_2^k + |\Omega| \lambda_3^k)^{1 \over k} \le \lambda_2 + {\lambda_2 |\Omega| \over k} \left({\lambda_3 \over \lambda_2}\right)^k,
	$$	
	using once again the inequality $(x + \delta)^{1 \over k} \le x^{1 \over k} + {\delta \over k}  x^{{1 \over k} - 1}$. Hence:
	$$
		{\cal E}_k^1 \le {|\Omega| \over k} \left({\lambda_3 \over \lambda_2}\right)^k.
	$$
	We turn to ${\cal E}_k^2$. Recall that $m_k \ge {1 + \lambda_2^k \over |\Omega|}$, so that 
	$$
		(m_k |\Omega| - 1)^{1 - {1 \over k}} \ge \lambda_2^{k-1}.
	$$
	Using both the fact that $(m_k |\Omega| - 1)^{1 - {1 \over k}} \ge \lambda_2^{k-1}$ and that:
	$$
		m_k = {1 \over |\Omega|} \sum_{i=1}^{|\Omega|} \lambda_i^k \le {1 \over |\Omega|}(1 + (|\Omega| - 1)\lambda_2^k) \le {1 \over |\Omega|} + \lambda_2^k. 
	$$
	 we get the upper bound:
	$$
		{\cal E}_k^2 \le {|\Omega| \sqrt{ 32  ({1 \over |\Omega|} + \lambda_2^k) \ln(K/\delta) } \over \sqrt{I} k \lambda_2^{k-1}}
	$$	
	Since ${\cal E}_k = {\cal E}_{k,1} + {\cal E}_{k,2}$ we have proven that:
	$$
		{\cal E}_k  \le {|\Omega| \over k} \left( \left({\lambda_3 \over \lambda_2}\right)^k + {\sqrt{ 32  ({1 \over |\Omega|} + \lambda_2^k) \ln(K/\delta) } \over \sqrt{I} \lambda_2^{k-1}} \right).
	$$
	It remains to upper bound $\min_{k=1,\dots,K}{\cal E}_k$ and to do so by considering $k = k^\star(K,I,\delta)$. Since $\lambda_2^{k^\star(K,I,\delta)} \le {1 \over |\Omega|}$ we have that:
	$$
		{\cal E}_{k^\star(K,I,\delta)} \le {|\Omega| \over k^\star(K,I,\delta)} \left( \left({\lambda_3 \over \lambda_2}\right)^{k^\star(K,I,\delta)} + \vspace{-0.1cm} {\Delta(K,I,\delta)^{1 \over 2} \over \lambda_2^{k^\star(K,I,\delta)}} \right)
	$$
	In fact both terms in the r.h.s. equal, since:
	$$
		k^\star(K,I,\delta) = {\ln(1/\Delta(K,I,\delta)) \over 2 \ln(1/\lambda_3)}. 
	$$
	so that $\lambda_2^{k^\star(K,I,\delta)} = \Delta(K,I,\delta)^{r \over 2}$ and:
	\begin{align*}
		{\cal E}_{k^\star(K,I,\delta)} &= {2 |\Omega| \over k^\star(K,I,\delta)} \Delta(K,I,\delta)^{1 - r \over 2} 
			= {4 |\Omega| \ln(1/\lambda_3) \over \ln({ |\Omega| I \over 64 \ln(K/\delta)})}  \left( 64 \ln(K/\delta)  \over |\Omega| I\right)^{{1 - r \over 2}}.
	\end{align*}
	We obtain the announced inequality by noticing $\min_{k=1,\dots,K} {\cal E}_k \le 	{\cal E}_{k^\star(K,I,\delta)}$ since $k^\star(K,I,\delta) \le K$.

\section{Proof of proposition~\ref{prop:convergence_rate}}
	To prove the result, we first check that the conditions of proposition~\ref{prop:scaling} hold. We have:
 $$
 	\Delta(K,I,\delta) = { 64 \ln(K/\delta)  \over |\Omega| I} = {64 (\ln n)^2 (2 \ln(n) + {1 \over 2} \ln \ln(n)) \over |\Omega| n}.
 $$
 We have $\Delta(K,I,\delta) \ge {1 \over |\Omega| n}$ so that: 
 $$
 	k^\star(K,I,\delta) = {\ln(1/\Delta(K,I,\delta)) \over 2 \ln(1/\lambda_3)} \le { \ln |\Omega| + \ln n  \over  2 \ln(1/\lambda_3)}.
 $$
 Since $K = (\ln n)^2$, the above implies $k^\star(K,I,\delta) \le K$ for $n$ large enough. Furthermore, since $k^\star(K,I,\delta) \to \infty$, when $n \to \infty$ and $\lambda_2 < 1$, we have $\lambda_2^{k^\star(K,I,\delta)} \le {1 \over |\Omega|}$ for $n$ large enough. Hence the conditions of proposition 2 hold for $n$ large enough, and we have:
 \begin{align*}
 	\min_{k=1,\dots,K} {\cal E}_k &\le {4 |\Omega| \ln(1/\lambda_3) \over \ln( 1/\Delta(K,I,\delta))} \left( \Delta(K,I,\delta) \right)^{{1 - r \over 2}} 
 	\le {4 |\Omega| \ln(1/\lambda_3) \over \ln( {|\Omega| n \over 160 (\ln n)^3}  )} \left( {160 (\ln n)^3  \over |\Omega| n} \right)^{{1 - r \over 2}} 
 \end{align*}
 Since, from above $\Delta(K,I,\delta) \le {160 (\ln n)^3  \over |\Omega| n}$. Using the fact that $\EE |\hat{\ell}_\star - \lambda_2| \le \delta + \min_{k=1,\dots,K} {\cal E}_k$ concludes the proof. \qed


\end{document}